\theoremstyle{plain}
\newtheorem{theorem}{Theorem}[section]
\newtheorem{lemma}[theorem]{Lemma}
\theoremstyle{definition}
\theoremstyle{remark}
\begin{document}

\title{TD-M(PC)$^2$: Improving Temporal Difference MPC Through Policy Constraint}
% \customauthors{Haotian Lin & Pengcheng Wang & Jeff Schneider & Guanya Shi}
% \customemails{\small vlin3@andrew.cmu.edu & \small wangpc@berkeley.edu & \small jeff4@andrew.cmu.edu & \small guanyas@andrew.cmu.edu}

\customauthors{
    \begin{tabular}{c c c c}
        Haotian Lin & Pengcheng Wang & Jeff Schneider & Guanya Shi \\
        \small vlin3@andrew.cmu.edu & 
        \small wangpc@berkeley.edu & 
        \small jeff4@andrew.cmu.edu & 
        \small guanyas@andrew.cmu.edu
    \end{tabular}
}
\maketitle
\thispagestyle{empty}

\begin{abstract}
% background
Model-based reinforcement learning algorithms that combine model-based planning and learned value/policy prior have gained significant recognition for their high data efficiency and superior performance in continuous control. 
% problem
However, we discover that existing methods that rely on standard SAC-style policy iteration for value learning, directly using data generated by the planner, often result in \emph{persistent value overestimation}. 
% our key idea
Through theoretical analysis and experiments, we argue that this issue is deeply rooted in the structural policy mismatch between the data generation policy that is always bootstrapped by the planner and the learned policy prior.
% our solution: emphasize our method is neat and simple
To mitigate such a mismatch in a minimalist way, we propose a policy regularization term reducing out-of-distribution (OOD) queries, thereby improving value learning. Our method involves minimum changes on top of existing frameworks and requires no additional computation. 
% quantitative
Extensive experiments demonstrate that the proposed approach improves performance over baselines such as TD-MPC2 by large margins, particularly in 61-DoF humanoid tasks. View qualitative results at \href{https://darthutopian.github.io/tdmpc_square/}{webpage}.
\end{abstract}

\section{Introduction}
\label{Introduction}
% Why MBRL and what is MBRL? Showing the popularity
Model-based reinforcement learning (MBRL) is a promising approach that leverages a predictive world model to solve sequential decision-making problems~\cite{littman1996reinforcement}. 

MBRL, in general, learns a dynamics model that enables the simulation of future trajectories and leverages the model for either policy learning or online planning. Due to its ability to exploit the structure of the environment, MBRL is fundamentally more sampling efficient than model-free algorithms both theoretically~\cite{sutton1991dyna, MBPO} and empirically in diverse domains such as robotics control~\cite{li2025robotic} and autonomous driving~\cite{kabzan2019learning}.

% Zoom in: why this specific MBRL category? The success. The pitfall. 
Recent studies have drawn attention to the combination of value learning and planning for test-time optimization, which could be referred to as temporal difference learning for model predictive control (TD-MPC)~\cite{hansen2022TDMPC}. Different from Dyna-like MBRL~\cite{sutton1991dyna, dreamerv3} that treats the learned model as a simulator for model-free policy learning, this line of works~\cite{bhardwaj2020information, LOOP, hansen2022TDMPC} leverage sampling-based MPC for planning, and model-free RL to acquire policy or value prior. Such a combination significantly reduces the planning burden for performant short-horizon planning and enhances data efficiency by rapid state space coverage through trajectory space exploring~\cite{lowrey2018POLO}.  

Despite their impressive performance in continuous control benchmarks~\cite{hansen2023tdmpc2}, we observed that standard policy iteration in TD-MPC implementation leads to \emph{persistent value overestimation}. It is also empirically observed that the performance of TD-MPC2 is far from satisfactory at some high-dimensional locomotion tasks~\cite{sferrazza2024humanoidbench}. This phenomenon is closely connected to, yet distinct from, the well-known overestimation bias arising from function approximation errors and error accumulation in temporal difference learning~\cite{thrun2014issues, sutton2018reinforcement, TD3}. More precisely, we identify the underlying issue as \textit{policy mismatch}. The behavior policy generated by the MPC planner governs data collection, creating a buffered data distribution that does not directly align with the learned value or policy prior. Consequently, errors in the value function are not adequately represented by the behavior policy, making these errors unnoticed and uncorrected during training. Through theoretical analysis, we further show that this structural issue causes such mismatch to accumulate with approximation errors, which are subsequently amplified by the planner. Notably, this challenge resembles the distributional shift problem in offline RL~\cite{levine2020offline}, where the behavior policy never aligned with the current policy~\cite{BCQ}. These findings highlight the necessity of conservative exploitation strategies when combining planners with policy improvement to address this structural deficiency.
% Solution
%Motivated by recent advances integrating offline learning into off-policy RL methods \cite{OOO, fuguided}, we delve into the theoretical foundations of the policy mismatch phenomenon.

Concretely, we propose a simple yet effective approach that allows a planning-based MBRL algorithm to better exploit data collected from online planning. The resulting algorithm, \underline{T}emporal \underline{D}ifference Learning for \underline{M}odel \underline{P}redictive \underline{C}ontrol with \underline{P}olicy \underline{C}onstraint, TD-M(PC)$^2$, acquires value and policy prior through distribution-constrained policy iteration. Such procedure extracts performant policy from the online buffer while reducing \textit{out-of-distribution} query that leads to approximation error.
TD-M(PC)$^2$ is easy to implement, and it only requires a minimalist modification to the \textit{state-of-the-art} TD-MPC2 framework with less than ten lines of code. Without introducing additional computational budget or need for environment-specific hyperparameter tuning, it seamlessly inherits desirable features in the previous pipeline and consistently improves its performance for high-dimensional continuous control problems on both DM control and HumanoidBench \cite{sferrazza2024humanoidbench}, especially in complex 61-DoF locomotion tasks. 

\iffalse
\begin{description}
  \item[$\bullet$]  What is the problem?
  \item[$\bullet$]  Why is it interesting and important?
  \item[$\bullet$]  Why is it hard? Why do naive approaches fail?
  \item[$\bullet$]  Why hasn't it been solved before? What's wrong with previous proposed solutions? How does mine differ?
  \item[$\bullet$]  What are the key components of my approach and results? Also include any specific limitations.
\end{description}
\fi

\section{Related Work}

\label{sec:Related Work}
\textbf{Model-based RL.} 
The core of model-based reinforcement learning is how to leverage the world model to recover a performant policy. Dyna-Q~\cite{sutton1991dyna} first introduced the idea of using simulated rollouts from a learned model to augment real-world experience for policy optimization. MBPO~\cite{MBPO} further provides a theoretical guarantee of monotonic policy improvement and promotes short model-generated rollouts. Dreamer \cite{katsigiannis2017dreamer, deramerv2, dreamerv3} optimizes policies entirely in imagination, leveraging latent world models for high-dimensional tasks like visual control. These methods are computationally efficient as they decouple model rollouts from online decisions, but they can suffer from model errors over long horizons.

Planning-based approaches use the world model for online decision-making by optimizing actions directly through simulated trajectories. PlaNet \cite{PlaNet} employs a latent dynamics model with trajectory optimization in latent space, while PETS \cite{PETS} utilizes an ensemble of probabilistic models and the Cross-Entropy Method (CEM) for sampling-based optimization. These methods are highly adaptive to online changes and precise for short-horizon tasks but face challenges in scaling to tasks with high-dimensional states or action spaces due to the computational cost of rollouts during execution.

\textbf{Temporal-Difference Model Predictive Control.}
Recent advances aim to balance scalability and adaptability by integrating strengths from both paradigms. \cite{bhardwaj2020information, LOOP, hansen2022TDMPC, DMPC} adopt a temporal-difference (TD) learning framework that combines with model predictive control, illustrating how a value-based learning signal can mitigate the need for hand-crafted cost functions and long-horizon planning. Building upon this idea, TD-MPC2~\cite{hansen2023tdmpc2} is able to learn scalable, robust world models tailored for continuous control tasks, effectively reducing compounding modeling errors and improving planning stability. These advances highlight how embedding temporal-difference learning within the MPC paradigm can significantly enhance control strategies' flexibility, sample efficiency, and robustness in high-dimensional continuous domains.

\textbf{Off-policy Learning with Policy Constraint}
Distributional mismatch is a long-standing challenge in off-policy learning. Standard off-policy algorithms are highly sensitive to distributional shifts, as bootstrapping errors can compound over time, leading to instability and poor generalization~\cite{BEAR}. Recent studies in offline RL have taken a huge leap in enabling policy learning from off-policy demonstrations. To enforce distributional constraints, \cite{BEAR, td3bc} incorporate policy regularization, while \cite{BCQ, AWR} mitigate OOD queries through importance sampling. Alternatively, \cite{IQL, XQL} adopt in-sample learning techniques to implicitly recover a policy from observed data, bypassing direct constraints on action selection.
The off-policy issue is also critical for planning-based MBRL that leverages a policy or value prior.
\cite{LOOP} introduced an approach that marries off-policy learning with online planning by actor regularization control, introducing conservatism into the planner. In comparison, our method addresses such constraints on the policy prior without compromising the planner. \cite{MBOP} achieves a similar planning process by learning a behavior cloning (BC) policy and corresponding value function. However, due to its pure offline nature, all the components can be considered to originate from the distribution of the behavior policy.

\iffalse
\begin{description}
  \item[$\bullet$]  Beginning, if it can be short yet detailed enough, or if it's critical to take a strong defensive stance about previous work right away. In this case Related Work can be either a subsection at the end of the Introduction, or its own Section 2.
  \item[$\bullet$]  End, if it can be summarized quickly early on (in the Introduction or Preliminaries), or if sufficient comparisons require the technical content of the paper. In this case Related Work should appear just before the Conclusions, possibly in a more general section ``Discussion and Related Work".
\end{description}
\fi
%===============================================================================
\section{Planning with Value and Policy Prior}
\label{sec: Model-Based RL with Bootstrapping}

\subsection{Preliminaries} 
% adding (3) introducing online planning part...
Continuous control problems can be defined as a Markov decision process (MDP)~\cite{sutton2018reinforcement} represented by $\mathcal{M} = \left( \mathcal{S}, \mathcal{A}, \rho, \rho_0, r, \gamma \right)$, with state space $\mathcal{S}$, action space $\mathcal{A}$, transition of states $\rho(s' | s, a)$, initial state distribution $\rho_0$, reward function $r(s, a)$ and discount factor $\gamma \in (0,1]$. The objective of the agent is to learn a policy $\pi: \mathcal{S} \rightarrow \mathcal{A}$ that maximizes discounted cumulative reward:
\begin{equation}
\label{eqn: rl obj}
    J{\left( \pi \right)} = \mathbb{E}_{\tau^{\pi}}\left[\sum_{t=0}^T \gamma^t r(s_t, a_t)\right],
\end{equation}
where $\tau^{\pi}$ is a trajectory sampled by executing $\pi$.

Model-based RL leverages the internal structure of the MDP by learning the dynamics model and planning through it rather than completely relying on the value function estimator. A refined closed-loop control policy can be acquired through local trajectory optimization methods such as Model Predictive Path Integral (MPPI) \cite{williams2017MPPI}. Action sequences of length $H$ are sampled and evaluated by rolling out \textit{latent trajectories}. At each step, parameters $\mu^*$ and $\sigma^*$ of a multivariate Gaussian are computed to maximize the expected return:
\begin{equation}
\label{eqn: planning procedure}
\begin{split}
    \mu^*, \sigma^* &= \arg\max_{\mu, \sigma} \mathbb{E}_{(a_t, a_{t+1}, \ldots, a_{t+H}) \sim \mathcal{N}(\mu, \sigma^2)}[G(s_t)] \\ 
    G(s_t) &= \sum_{h=t}^{H-1} \gamma^h r(z_h, a_h) + \gamma^H \hat{V}(z_{t+H}) \\
    \mathrm{s.t.} \quad z_{t+1} &= d(z_{t}, a_t)
\end{split}
\end{equation}
where $\mu, \sigma \in \mathbb{R}^{H \times m}$. 
Note that only the first action $a_t \sim \mathcal{N}(\mu_t^*, {\sigma_t^*}^2)$ is executed, and another optimization problem is solved at time step $t+1$ (i.e., receding horizon).
We denote such $H$-step lookahead policy, which solves \eqref{eqn: planning procedure} at every time step, as $\pi_H$. $\pi_H$ leverages both the planner and the value function $\hat{V}$. 

\subsection{Basic Pipeline}
Combining planning and temporal-difference learning has proven to be an effective way to reduce planning horizon and improve data efficiency. 
%Algorithms following this line of work generally include components such as dynamics and reward model, value function, and nominal policy.
A widely recognized pipeline from TD-MPC2 is to jointly learn encoder $z = h(s, e)$, latent dynamics $z' = d(z, a, e)$, reward, $\hat{r} = R(z, a, e)$, nominal policy $\hat{a} = \pi(z, e)$, and action value function $\hat{q} = \hat{Q}(z, a, e) \approx Q^\pi(z, a, e)$. where $\mathbf{z}$ is the latent state representation and $e$ is a learnable task embedding for training multitask world models. 

Specifically, $h, d, R, Q$ are jointly trained through the following loss:
\begin{equation}
\label{eqn: TDMPC2 model loss}
\begin{split}
    \mathcal{L} &\doteq \underset{(s, a, r, s')_{0:H}}{\mathbb{E}} \Big[ \sum_{t=0}^H \gamma^t \Big( \Vert d(z_t, a_t, e) - sg(h(s_t'))\Vert_2^2 \\
    &+ CE(\hat{r}_t, r_t) + CE(\hat{q}_t, q_t) \Big) \Big],
\end{split}
\end{equation}
where the targets $q_t$ is generated by bootstrapping nominal policy $\pi$ (refer to \eqref{eqn: value target}), and $sg$ is the stop-grad operator. $\pi$ is a stochastic Tanh-Gaussian policy trained with the maximum Q objective in a model-free manner. During inference, $\pi$ selects action at terminal state, resulting in value estimation as $\hat{V}(z_{t+H}) = \mathbb{E}_{a_{t+H}\sim\pi(z_{t+H}, e)}[\hat{Q}(z_{t+H}, a_{t+H}, e)]$, which is appended to the end of sampled trajectories in \eqref{eqn: planning procedure}.

\subsection{Value Overestimation}
\label{sec: suboptimal value estimation}

\begin{figure}[ht]
\vskip 0.2in
\begin{center}
\centerline{\includegraphics[width=0.8\columnwidth]{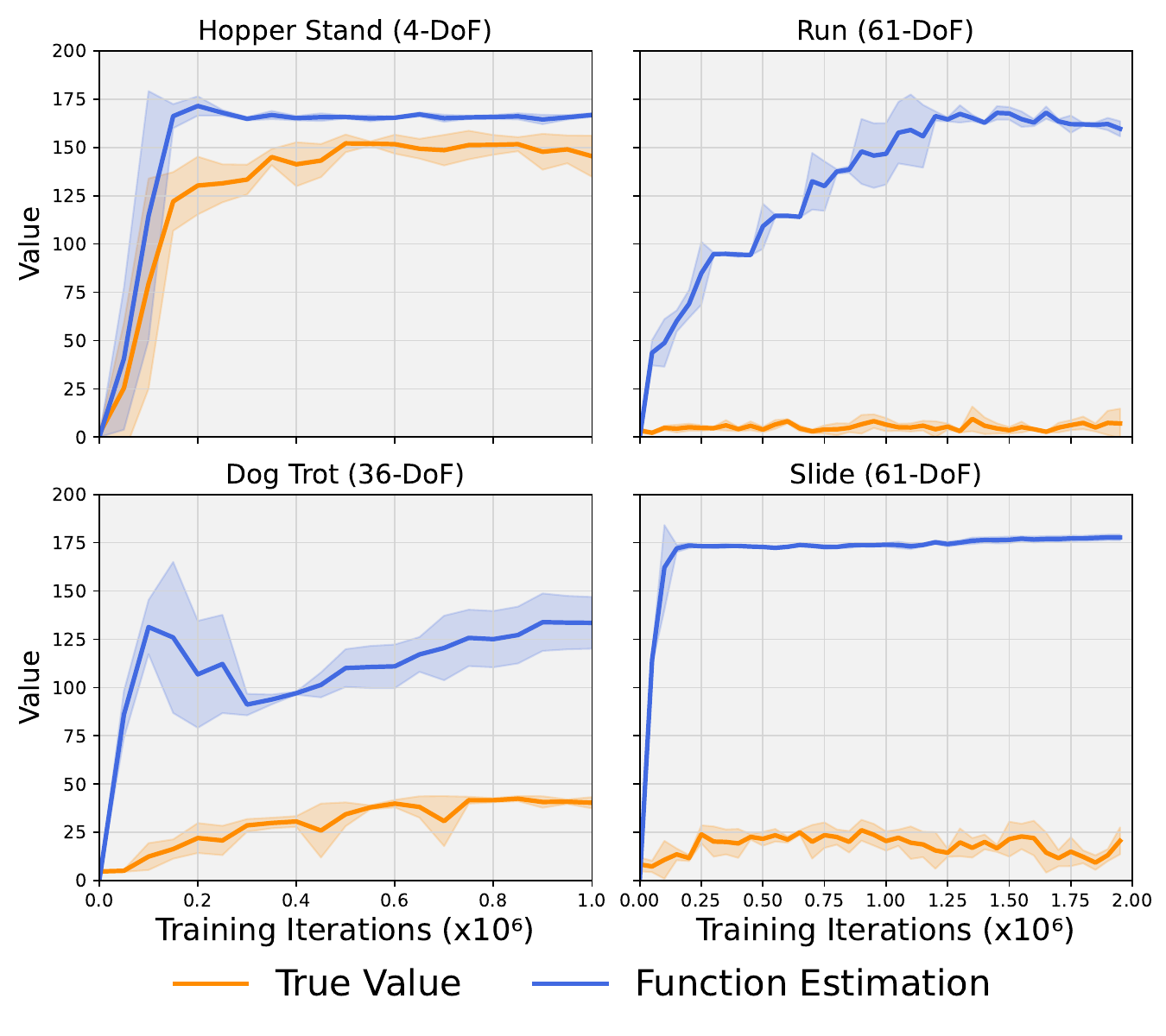}}
\caption{Value approximation error for TD-MPC2. The true value is estimated using the average discounted return over 100 episodes following the nominal policy $\pi$; Function estimation is obtained by $\hat{V} = \mathbb{E}_\pi[\hat{Q}]$. The results are averaged over three seeds for an unbiased assessment.}
\label{fig: approx error}
\end{center}
\vskip -0.2in
\end{figure}

Planning with a value and policy prior ideally requires the value function to be close to the global optimal $V^*$. In TD-MPC2, value estimation is derived from approximate policy iteration (API), like regular off-policy learning. Despite prior work suggesting that nominal policy acquired from SAC-style policy learning is sufficiently expressive for value training \cite{hansen2022TDMPC}, we find that value approximation error could still be significant for complex high-dimensional locomotion tasks. Figure \ref{fig: approx error} provides a clear demonstration on value approximation error  $\mathbb{E}_{\rho_0}[\hat{V} - V^\pi]$ in four distinct control tasks from DMControl \cite{tassa2018deepmind} and HumanoidBench \cite{sferrazza2024humanoidbench}: \texttt{Hopper-Stand} ($\mathcal{A}\in \mathbb{R}^4$, 15\% error), \texttt{Dog-Trot} ($\mathcal{A}\in \mathbb{R}^{36}$, 231\% error), \texttt{h1hand-run-v0} ($\mathcal{A}\in \mathbb{R}^{61}$, 2159\% error), \texttt{h1hand-slide-v0} ($\mathcal{A}\in \mathbb{R}^{61}$, 746\% error). While overestimation bias is within an acceptable range in low-dimensional tasks, it is incredibly large in high-dimensional tasks and does not tend to converge to ground truth. This persistent value overestimation is also reflected in performance. According to benchmarking results \cite{sferrazza2024humanoidbench}, TD-MPC2 failed to acquire performant policy in many high-dimensional humanoid locomotion tasks.

To understand how the approximation error influences the overall performance of the $H$-step look-ahead policy, we present Theorem \ref{thm: H-step policy Suboptim} that is adopted from Theorem 1 in LOOP~\cite{LOOP}. Detailed proof can be found in Appendix \ref{thm: A H-step suboptim v approx error}.

\begin{theorem}[$H$-step Policy Suboptimality]
\label{thm: H-step policy Suboptim}
Assume the nominal policy $\pi_k$ is acquired through approximation policy iteration (API) and the resulting planner policy at k-th iteration is $\pi_{H, k}$, given upper bound for value approximation error $\Vert \hat{V}_k - V^{\pi_k} \Vert_\infty \leq \epsilon_k$. Also denote approximation error for dynamics model $\hat{\rho}$ as $\epsilon_m = \max_{s, a} D_{TV}(\rho(\cdot | s_t, a_t) \Vert \hat{\rho}(\cdot | s_t, a_t))$, planner sub-optimality as $\epsilon_p$. Also let the reward function $r$ is bounded by $[0, R_\text{max}]$ and $\hat{V}$ be upper bounded by $V_\text{max} \leq \frac{1}{1-\gamma}R_\text{max}$, then the following uniform bound of performance suboptimality holds:
\begin{equation}
\label{eqn: TDMPC performance bound}
    \begin{split}
        &\quad \limsup_{k \rightarrow \infty}\vert V^* - V^{\pi_{H, k}} \vert \\
        & \leq \limsup_{k \rightarrow \infty} \frac{2}{1 -\gamma^H}\Big[ C(\epsilon_{m, k}, H, \gamma) + \frac{\epsilon_{p,k}}{2} \\
        &\quad + \frac{\gamma^{H}(1 + \gamma^2)}{(1 - \gamma)^2} \epsilon_k \Big]
    \end{split}
    \end{equation}
    for any policy $\mu$, while $C$ is defined as:
    \begin{equation}
    \label{eqn: C defination}
        C(\epsilon_m, H, \gamma) = R_{\text{max}} \sum_{t=0}^{H-1} \gamma^t t \epsilon_m + \gamma^H H \epsilon_m V_{\text{max}}
    \end{equation}
\end{theorem}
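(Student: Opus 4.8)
The plan is to treat the $H$-step lookahead policy $\pi_{H,k}$ as the greedy policy with respect to an $H$-step Bellman lookahead operator that rolls out the learned dynamics $\hat{\rho}$ for $H$ steps and bootstraps with the approximate value $\hat{V}_k$ at the horizon, exactly as prescribed by the planning objective in \eqref{eqn: planning procedure}. The structural fact I would exploit is that this composite operator is a $\gamma^H$-contraction in the sup-norm, so its associated geometric series produces the $\frac{1}{1-\gamma^H}$ prefactor, while the factor of $2$ arises from a triangle-inequality split of $\vert V^* - V^{\pi_{H,k}}\vert$ into the gap between $V^*$ and the operator's fixed point and the gap between that fixed point and $V^{\pi_{H,k}}$.

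First I would isolate three independent sources of error and bound each against the idealized $H$-step lookahead that uses the true model $\rho$ together with the exact value. The planner suboptimality $\epsilon_{p,k}$ enters directly as an additive term, since the MPPI optimizer does not solve the inner $\arg\max$ in \eqref{eqn: planning procedure} exactly. The model error is handled by a simulation-lemma argument: propagating the bound $\epsilon_m = \max_{s,a} D_{TV}(\rho \Vert \hat{\rho})$ through an $H$-step rollout, the per-step transition discrepancy compounds linearly in the rollout depth, so that reward mismatches accumulate as $R_\text{max}\sum_{t=0}^{H-1}\gamma^t t\,\epsilon_m$ while the terminal bootstrap mismatch contributes $\gamma^H H\,\epsilon_m V_\text{max}$. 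This is precisely the quantity $C(\epsilon_{m,k}, H, \gamma)$ defined in \eqref{eqn: C defination}.

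The remaining and most delicate step is to convert the value-approximation error, which is stated as $\Vert \hat{V}_k - V^{\pi_k}\Vert_\infty \leq \epsilon_k$ (measured against the nominal policy's own value, not against $V^*$), into a bound against the optimal value that feeds the $H$-step operator. Here I would invoke the asymptotic error-propagation analysis of approximate policy iteration: the per-iteration approximation error drives a recursion whose limiting fixed point yields a bound on $\limsup_{k\rightarrow\infty}\Vert \hat{V}_k - V^*\Vert_\infty$ exhibiting the characteristic $(1-\gamma)^{-2}$ dependence of API. Combining this with the $\gamma^H$ discount on the terminal bootstrap and carefully tracking the additional $\gamma^2$ contribution from the improvement step produces the coefficient $\frac{\gamma^H(1+\gamma^2)}{(1-\gamma)^2}$.

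I expect this API error-propagation step to be the main obstacle, as it is the only place where the $\limsup_{k\rightarrow\infty}$ is genuinely required and where the non-obvious $(1+\gamma^2)/(1-\gamma)^2$ factor must be derived rather than read off. Once the three contributions are assembled and multiplied by the $\frac{2}{1-\gamma^H}$ contraction prefactor, taking the $\limsup$ on both sides yields the stated bound \eqref{eqn: TDMPC performance bound}; the $C$, $\epsilon_p$, and $\epsilon_k$ terms remain cleanly separated because each was bounded independently before the final triangle-inequality assembly.
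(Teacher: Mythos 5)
Your proposal matches the paper's proof in all essentials: the paper likewise invokes the LOOP Theorem~1 decomposition (contraction-style recursion over the $H$-step horizon plus a simulation-lemma bound yielding $\frac{2}{1-\gamma^H}\big[C(\epsilon_{m,k},H,\gamma) + \frac{\epsilon_{p,k}}{2} + \gamma^H \Vert V^* - \hat{V}_k\Vert_\infty\big]$) and then closes with the asymptotic API error-propagation bound $\limsup_k \Vert V^* - V^{\pi_k}\Vert_\infty \leq \frac{2\gamma}{(1-\gamma)^2}\limsup_k \epsilon_k$. The only place your sketch is looser than the paper is the origin of the $(1+\gamma^2)$ factor, which is not extra bookkeeping of the improvement step but simply the triangle inequality $\Vert V^* - \hat{V}_k\Vert_\infty \leq \Vert V^* - V^{\pi_k}\Vert_\infty + \epsilon_k$ combined with the API bound, since $1 + \frac{2\gamma}{(1-\gamma)^2} = \frac{1+\gamma^2}{(1-\gamma)^2}$.
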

This theorem demonstrates that errors in the model and value functions have a decoupled influence on planning performance. If assuming model error and planner suboptimality are insignificant, then converging value approximation error guarantees converging planning performance. Notably, the theorem also indicates that, under identical conditions, planning procedure allows $\pi_H$ to mitigate its reliance on value accuracy by at least a factor of $\gamma^{H-1}$ compared to a greedy policy\footnote{See proof in Appendix~\ref{thm: A greedy policy bound approx error}}. This explains why TD-MPC2 achieves strong performance in certain tasks despite exhibiting significant overestimation bias. However, the policy learning framework does not guarantee reduced approximation error in practice. As task complexity increases—particularly in environments with high-dimensional action spaces such as \texttt{h1hand-run-v0}—value overestimation worsens, leading to inefficient learning and suboptimal performance. In the following chapter, we delve deeper into the root cause of this phenomenon and uncover a fundamental structural limitation in TD-MPC2.

%===============================================================================

\section{Policy Mismatch in Planning-Based MBRL}
\label{sec: Method} 
In many cases, leveraging online planning for data collection is favorable, resulting in high-quality trajectories. However, Due to the planner used to interact with the environment, training data distribution corresponds with the planner policy $\pi_H$ instead of the nominal policy $\pi$. Such distributional mismatch between $\pi_H$ and $\pi$ may incur severe generalization errors from the value estimation, undermining training stability.  
%This refers to extrapolation error that has been well articulated in offline RL studies \cite{BEAR, AWR, levine2020offline}. 
In this chapter, we provide intuitions and theoretical analysis of the value learning issue while distinguishing it from similar problems encountered in model-free RL or offline RL.

\subsection{Policy Mismatch and Extrapolation Error}
%As an off-policy RL algorithm, TD-MPC naturally faces distributional shift problems. 
Extrapolation error that has been well articulated in offline RL studies \cite{BEAR, AWR, levine2020offline}. Such errors appear during policy evaluation, where the value function is queried with \textit{out-of-distribution} (OOD) state-action pair. Then, temporal difference methods propagate generalization errors iteratively, causing the value estimation to deviate further. Sequentially, value approximation error directly undermines performance by corrupting the return estimation of sampled trajectories as discussed in \ref{sec: suboptimal value estimation}.

While the training dataset is fixed in offline RL, in online policy learning, we expect to fix such errors by exploring the over-estimated regions in state-action space. Thus, it is important to understand how the policy mismatch brought by the planner affects value learning and what distinguishes it from model-free off-policy learning. We demonstrate such influence by the following toy example. 

\begin{figure}[ht]
\vskip 0.2in
\begin{center}
\centerline{\includegraphics[width=0.9\columnwidth]{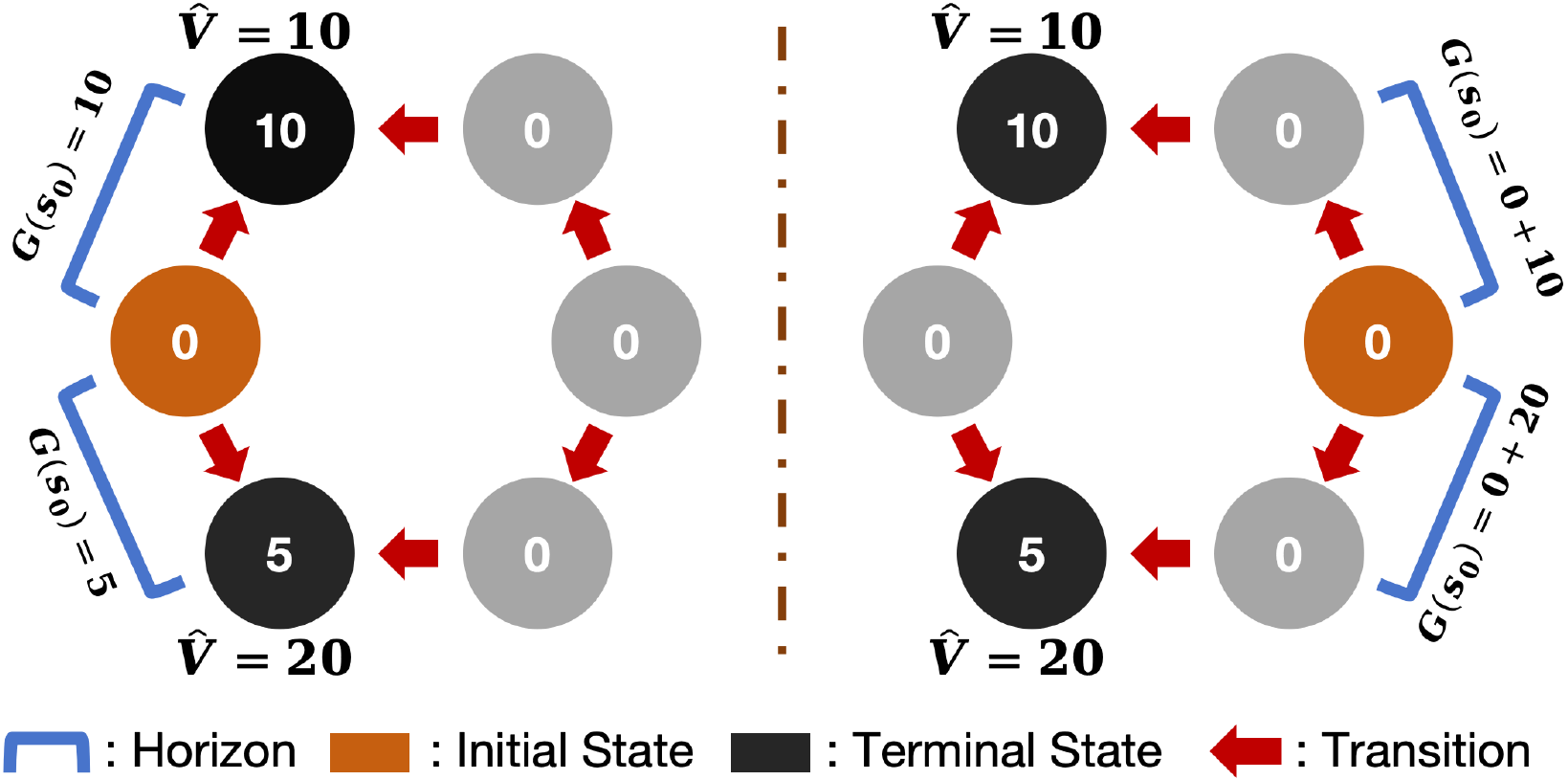}}
\caption{Toy Example}
\label{fig: toy example}
\end{center}
\vskip -0.2in
\end{figure}

% TODO: rewrite
Figure \ref{fig: toy example} illustrates a simple oriented graph world %(higher possibility initialized on the left) 
with two terminal states among six states. Assume all rewards (labeled in each state) are known. Consider a 1-step lookahead policy $\pi_1$ acquired through planning process \eqref{eqn: planning procedure} and inaccurate value estimation $\hat{V}$ at terminal states. If initialized on the left, $\pi_1$ will always choose the optimal action that ends up getting reward 10. However, the value error remains since the agent does not explore the other terminal state. Thus, when initialized on the right, the agent will be misled to the suboptimal terminal state. This describes a latency on value estimation calibration. In contrast, a standard greedy policy will immediately fix the value error by directly visiting the overestimated state. In more general cases, the planner policy refined through the H-step planning process may never cover relatively low reward regions where under-trained nominal policy tends to go. 
%Figure \ref{fig: approx error} also demonstrates that persistent value overestimation happens even when the planning horizon is relatively short, and the planning results largely rely on value estimation. This phenomenon highlights the negative influence of policy mismatch on value learning. 

\subsection{Error Accumulation in TD-MPC}
% TODO: sufficient explanation
While the policy mismatch in TD-MPC delays the correcting of value overestimation bias, one might expect that, given sufficient perturbations, the agent would eventually visit overestimated regions and rectify the errors. However, we argue that this self-correction is extremely difficult because value approximation errors not only propagate across states \cite{TD3} but, more importantly, accumulate through policy iteration. As a result, the $H$-step lookahead policy $\pi_H$ drifts further from the nominal policy $\pi$, exacerbating the mismatch rather than resolving it. This compounding effect makes policy mismatch in TD-MPC a far more critical issue than it might initially appear. We first quantify approximation error accumulation in the following theorem. We defer the complete proof to Appendix \ref{thm: A H-step policy error propagation}.

\begin{theorem}[TD-MPC Error Accumulation]
    \label{thm: H-step policy error accumulation}
    Consider $\pi_k$ is the nominal policy acquired through approximation policy iteration (API), and the resulting H-step lookahead policy is $\pi_{H, k}$. Assume $\pi_{H, k}$ outperforms $\pi_k$ with performance gap $\delta_k = \Vert V^{\pi_{H, k}} - V^{\pi_k} \Vert_\infty$. Denote value approximation error $\epsilon_k = \Vert \hat{V}_k - V^{\pi_k} \Vert_\infty$, approximated dynamics holds model error $\epsilon_{m, k}$, planner sub-optimality is $\epsilon_p$. Also let the reward function $r$ is bounded by $[0, R_\text{max}]$, then the following uniform bound of performance gap holds:
    \begin{equation}
    \label{eqn: TDMPC error accumulation}
    \begin{split}
        \delta_k &\leq 
            \frac{1}{1 - \gamma^H}\Big[ 2C(\epsilon_{m, k-1}, H, \gamma) + \epsilon_{p, k-1}\\
            &\quad + (1 + \gamma^H)\delta_{k-1} + \frac{2\gamma(1 + \gamma^{H-1})}{1 - \gamma}\epsilon_{k-1} \Big]
    \end{split}
    \end{equation}
    where $C$ is defined in equation \eqref{eqn: C defination}.
\end{theorem}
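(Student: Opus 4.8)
\emph{Proof strategy.} The plan is to reuse the operator-theoretic machinery behind Theorem~\ref{thm: H-step policy Suboptim} and convert it into a one-step recursion in the gap $\delta_k$. Introduce the ideal $H$-step lookahead operator $(\mathcal{L}_H V)(s) = \max_{a_{0:H-1}} \mathbb{E}_\rho[\sum_{h=0}^{H-1}\gamma^h r(s_h,a_h) + \gamma^H V(s_H)\mid s_0=s]$, which alters $V$ only through its terminal term and is therefore a $\gamma^H$-contraction, $\Vert \mathcal{L}_H V - \mathcal{L}_H V'\Vert_\infty \le \gamma^H \Vert V - V'\Vert_\infty$. As in the proof of Theorem~\ref{thm: H-step policy Suboptim}, the realized planner $\hat{\mathcal{L}}_H$ (approximate dynamics, suboptimal optimization) differs from $\mathcal{L}_H$ by at most $C(\epsilon_m,H,\gamma)+\epsilon_p$, the model-error part accumulated along the horizon exactly as in \eqref{eqn: C defination}. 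Since $\pi_{H,k}$ is assumed to dominate $\pi_k$, I work with the signed gap $V^{\pi_{H,k}} - V^{\pi_k}\ge 0$, whose sup-norm is its maximum, so it suffices to upper bound $V^{\pi_{H,k}}(s)-V^{\pi_k}(s)$ uniformly.

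First I would pin down the API coupling that makes the recursion close. Although $\delta_k$ is indexed by iteration $k$, the two policies compared at step $k$ are both built from the value estimate produced at the end of iteration $k-1$: the nominal policy $\pi_k$ is the one-step greedy improvement of $\hat V_{k-1}$, and the lookahead policy is the $H$-step plan anchored at the same estimate. This is why the bound carries the iteration-$(k-1)$ errors $\epsilon_{k-1}$, $\epsilon_{m,k-1}$, $\epsilon_{p,k-1}$, and why the previous gap $\delta_{k-1}=\Vert V^{\pi_{H,k-1}}-V^{\pi_{k-1}}\Vert_\infty$ re-enters: the estimate $\hat V_{k-1}$ approximates $V^{\pi_{k-1}}$ up to $\epsilon_{k-1}$, and $V^{\pi_{k-1}}$ in turn sits $\delta_{k-1}$ below the previous planner's value $V^{\pi_{H,k-1}}$.

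The core computation then telescopes through three anchors. I would (i) bound $V^{\pi_{H,k}}$ above by the realized lookahead value of its first-action-then-replan policy; the receding-horizon rollout sums a geometric series in $\gamma^H$, producing the $\tfrac{1}{1-\gamma^H}$ prefactor together with the planner discrepancy $2C(\epsilon_{m,k-1},H,\gamma)+\epsilon_{p,k-1}$. Next (ii) replace the terminal value $\hat V_{k-1}$ by $V^{\pi_{k-1}}$ via the $\gamma^H$-contraction of $\mathcal{L}_H$, injecting a $\gamma^H\epsilon_{k-1}$ term at the terminal stage; after policy evaluation this scales by $\tfrac{1}{1-\gamma}$ and, combined with the one-step greedy error used to lower-bound $V^{\pi_k}$ (which contributes the immediate $\tfrac{\gamma}{1-\gamma}$ scale), collapses to $\tfrac{2\gamma(1+\gamma^{H-1})}{1-\gamma}\epsilon_{k-1}$. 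Finally (iii) substitute $V^{\pi_{k-1}}$ for $V^{\pi_{H,k-1}}$: passing this difference through the lookahead operator yields $\gamma^H\delta_{k-1}$, while the direct triangle-inequality comparison between $V^{\pi_k}$ and $V^{\pi_{H,k-1}}$ contributes a further $\delta_{k-1}$, so together they give $(1+\gamma^H)\delta_{k-1}$. Collecting all terms and dividing by $1-\gamma^H$ reproduces \eqref{eqn: TDMPC error accumulation}.

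I expect the bookkeeping of $\epsilon_{k-1}$ to be the main obstacle: the same value error enters both through the terminal $\gamma^H$ stage of the $H$-step plan and through the immediate stage of the one-step greedy nominal policy, and these two channels must be recombined to land exactly on the coefficient $\tfrac{2\gamma(1+\gamma^{H-1})}{1-\gamma}$ rather than a looser constant. A secondary subtlety is orienting every sup-norm inequality consistently—the dominating policy varies with the state, so the chain of anchors must be kept on the correct side while still yielding a \emph{uniform} bound. The conceptually important payoff is that the coefficient of $\delta_{k-1}$ is $\tfrac{1+\gamma^H}{1-\gamma^H}>1$, so the recursion amplifies rather than contracts the gap—precisely the error-accumulation phenomenon the theorem certifies.
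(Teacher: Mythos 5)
Your proposal takes essentially the same route as the paper's Appendix~\ref{thm: A H-step policy error propagation} proof: unroll $V^{\pi_{H,k}}$ over the receding horizon so the $\gamma^H$-discounted self-reference yields the $\frac{1}{1-\gamma^H}$ prefactor, then insert the same three anchors $\hat{V}_{k-1}$, $V^{\pi_{k-1}}$, $V^{\pi_{H,k-1}}$ to pick up the planner/model discrepancy $2C(\epsilon_{m,k-1},H,\gamma)+\epsilon_{p,k-1}$ via Theorem 1 of \cite{LOOP}, the $(1+\gamma^H)\delta_{k-1}$ term, and the two $\epsilon_{k-1}$ channels (terminal-value swaps contributing $2\gamma^H\epsilon_{k-1}$ plus the greedy/API improvement bound of Lemma 6.1 in \cite{bertsekas1996neuro} contributing $(1+\gamma^H)\frac{2\gamma}{1-\gamma}\epsilon_{k-1}$) that recombine exactly to $\frac{2\gamma(1+\gamma^{H-1})}{1-\gamma}\epsilon_{k-1}$, which is precisely the paper's bookkeeping. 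You correctly flag the $\epsilon_{k-1}$ recombination and the sup-norm orientation as the delicate points—these are indeed where the paper's own argument is loosest—and the only slip is cosmetic: your strategy paragraph states the planner discrepancy as $C(\epsilon_m,H,\gamma)+\epsilon_p$, whereas the correct constant $2C+\epsilon_p$ is the one you actually use in step (i).
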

Note that the upper bound is quite loose due to the usage of infinite norm. Nonetheless, the direct takeaway of Theorem \ref{thm: H-step policy error accumulation} is that we can always expect a relatively large performance gap between $H$-step lookahead policy and nominal policy due to the accumulating approximation error. We further discuss this theorem by comparing the value overestimation trend of TD-MPC2 with horizon H=1 in Appendix \ref{sec: appendix C}. The following theorem further bridges the performance gap and policy mismatch between $\pi_{H, k}$ and $\pi_k$:
\begin{theorem}[Policy divergence]
\label{theorem: Policy divergence}
    Given policies $\pi, \pi' \in \Pi:S \rightarrow A$, suppose reward is upper bounded by $R_{\text{max}}$, then we have policy divergence lower bounded by performance gap as:
    \begin{equation}
        \max_s D_{TV}\left(\pi'(a | s) \Vert \pi(a | s)\right) \geq \frac{(1 - \gamma)^2}{2 R_{\text{max}}} \vert J^{\pi} - J^{\pi'} \vert
    \end{equation}
\end{theorem}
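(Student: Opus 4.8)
The plan is to invoke the performance difference lemma and reduce the performance gap to an integral of the advantage function against the \emph{signed} measure $\pi' - \pi$, which the total variation distance then controls. Write $A^\pi(s,a) = Q^\pi(s,a) - V^\pi(s)$ for the advantage of $\pi$, and let $d^{\pi'}$ denote the $\gamma$-discounted state-visitation distribution induced by $\pi'$. The lemma states
\[
J^{\pi'} - J^{\pi} = \frac{1}{1-\gamma}\,\mathbb{E}_{s\sim d^{\pi'}}\,\mathbb{E}_{a\sim\pi'(\cdot\mid s)}\big[A^{\pi}(s,a)\big].
\]
Since the target inequality takes an absolute value of the performance gap, the direction of this identity is immaterial.

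The key step I would exploit is the identity $\mathbb{E}_{a\sim\pi(\cdot\mid s)}[A^\pi(s,a)] = 0$, which holds because $\mathbb{E}_{a\sim\pi}[Q^\pi(s,a)] = V^\pi(s)$ by definition of the value function. Subtracting this vanishing term turns the inner expectation into a pairing against the policy difference,
\[
\mathbb{E}_{a\sim\pi'(\cdot\mid s)}[A^\pi(s,a)] = \int_{\mathcal{A}} \big(\pi'(a\mid s) - \pi(a\mid s)\big)\,A^\pi(s,a)\,da,
\]
and an $L^1$--$L^\infty$ (H\"older) bound then gives
\[
\Big|\int_{\mathcal{A}} (\pi'-\pi)\,A^\pi\,da\Big| \le \Vert A^\pi(s,\cdot)\Vert_\infty \int_{\mathcal{A}}\vert\pi'(a\mid s)-\pi(a\mid s)\vert\,da = 2\,\Vert A^\pi(s,\cdot)\Vert_\infty\, D_{TV}\big(\pi'(\cdot\mid s)\Vert\pi(\cdot\mid s)\big),
\]
using that the $L^1$ distance between densities equals twice their total variation. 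This is the step that produces the $D_{TV}$ term, and it is where the argument genuinely earns the constant.

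It remains to bound the two leftover quantities. Because $r\in[0,R_\text{max}]$, both $Q^\pi$ and $V^\pi$ lie in $[0, R_\text{max}/(1-\gamma)]$, so $\Vert A^\pi(s,\cdot)\Vert_\infty \le R_\text{max}/(1-\gamma)$ uniformly in $s$; and the expectation over $d^{\pi'}$ of any per-state nonnegative quantity is bounded by its supremum over $s$. Chaining these yields $\vert J^{\pi'}-J^{\pi}\vert \le \frac{2R_\text{max}}{(1-\gamma)^2}\max_s D_{TV}(\pi'(\cdot\mid s)\Vert\pi(\cdot\mid s))$, and rearranging gives the claim. I anticipate no serious obstacle in the bounding itself; the only delicate points are staying consistent about the normalization of $J$ (the $\frac{1}{1-\gamma}$ prefactor of the performance difference lemma against the horizon convention used to define $J$ in \eqref{eqn: rl obj}) and tracking the $L^1$-to-$D_{TV}$ factor of $2$, since these two choices jointly pin down the exact constant $(1-\gamma)^2/(2R_\text{max})$.
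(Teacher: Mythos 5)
Your proof is correct, and it takes a genuinely different route from the paper's. The paper argues through the trajectory-distribution side: it writes $\vert J^{\pi} - J^{\pi'} \vert$ as a discounted sum of differences between the time-$t$ joint state-action marginals, bounds each term by $2R_{\text{max}}\gamma^t D_{TV}\bigl(p^\pi_t(s,a) \Vert p^{\pi'}_t(s,a)\bigr)$, and then invokes Lemmas B.1 and B.2 of MBPO to propagate per-state policy TVD into the marginals, giving $D_{TV}(p^\pi_t \Vert p^{\pi'}_t) \leq (t+1)\max_s D_{TV}(\pi \Vert \pi')$ and hence the factor $\sum_t \gamma^t(1+t) = (1-\gamma)^{-2}$. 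You instead apply the performance difference lemma once, kill the baseline via $\mathbb{E}_{a\sim\pi}[A^\pi(s,a)] = 0$, and pay the two $(1-\gamma)^{-1}$ factors separately: one from the PDL prefactor and one from $\Vert A^\pi \Vert_\infty \leq R_{\text{max}}/(1-\gamma)$, with the H\"older step supplying the $2D_{TV}$ — landing on exactly the same constant $\frac{2R_{\text{max}}}{(1-\gamma)^2}$. Your version is arguably cleaner and more self-contained: it avoids importing the MBPO distribution-shift machinery and needs no induction over time steps, at the cost of assuming the (standard, but nontrivial) performance difference lemma. The paper's decomposition, by contrast, makes the per-timestep error growth $(t+1)\gamma^t$ explicit, which is the quantity its surrounding error-accumulation analysis (Theorems \ref{thm: H-step policy Suboptim} and \ref{thm: H-step policy error accumulation}) trades in, and it extends more directly to settings where the dynamics also differ between the two occupancies. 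Your two flagged delicate points — the normalization of $d^{\pi'}$ against the PDL prefactor, and the $L^1$-to-$D_{TV}$ factor of $2$ — are handled consistently, so the constant checks out.
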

Proof can be found in Appendix \ref{thm: A Policy divergence}.

Here, we connect error accumulation in the performance gap to policy mismatch. As approximation errors amplify the value gap shown in Theorem \ref{thm: H-step policy error accumulation}, they also induce large policy divergence, exacerbating distributional shifts. Consequently, the region corresponding to $\pi$ is underrepresented in the buffer, preventing the correction of overestimation and perpetuating generalization errors during policy evaluation.

In conclusion, although in \ref{sec: suboptimal value estimation} the $H$-step lookahead policy is theoretically less sensitive to value approximation errors, a substantial of them are introduced and accumulate over training time due to policy mismatch. As a result, naively applying policy iteration appears to be flawed, failing to fully exploit the potential of combining model-based optimization and temporal-difference learning. The next chapter will discuss potential approaches to improve the current temporal difference MPC algorithms.

\section{Improving Value Learning By A Minimalist Approach} 
In order to mitigate policy mismatch, constraints or regularization can be either addressed on the planner or policy iteration to align both policies. Unlike prior work, \cite{LOOP} that constrains the planner with the policy prior, we address policy mismatch during policy iteration. This is based on the intuition that constrained online searching the former applied inevitably harms exploration and data efficiency, as its performance does not match the original TD-MPC \cite{hansen2022TDMPC}, which learns without additional regularization.

\subsection{TD-MPC with Policy Constraint}
While in principle, many training approaches in offline RL can be applied, some of them are overly convoluted and hard to tune due to the use of complex sampling-based estimation or the massive amounts of hyperparameters \cite{td3bc}. We favor a simple implementation framework that can be seamlessly integrated into the current planning-based MBRL pipeline.

To avoid \textit{out-of-distribution} query, a constrained policy improvement can be described as solving a constrained optimization problem:
\begin{equation}
\label{eqn: policy target}
\begin{split}
    \pi_{k+1} &\coloneqq \underset{\pi}{\operatorname{argmax}}  \underset{a \sim \pi(\cdot | s)}{\mathbb{E}}[Q_k(a, s)] \\ 
    &\mathrm{s.t.} \mathbf{D_{KL}}(\pi_{k+1} \Vert \mu_k) \leq \epsilon
\end{split}
\end{equation}
\begin{equation}
\label{eqn: value target}
\begin{split}
    \mathcal{T}^{\pi}Q \coloneqq  r(s, a) + \gamma \mathbb{E}_{s' \sim \rho(\cdot | s, a)} \mathbb{E}_{a' \sim \pi(\cdot | s')} [Q^{\pi}(s', a')]
\end{split}
\end{equation}

Where $\mu_k$ denotes the behavior policy from the buffer at the k-th iteration. It can be represented as a weighted sum of $H$-step lookahead policies of past iteration $\mu_K(\cdot | s) = \Sigma_{k=0}^K \omega_k \pi_{H, k}(\cdot | s)$, where $\omega_k$ is the weight for the multi-variant Gaussian policy $\pi_{H, k}$. 

\subsection{Implementation of A Minimalist Modification}
Instead of directly tackling this constrained policy improvement, many practices approximately solve it by dealing with its Lagrangian version~\cite{AWR, nair2020awac}. We express the resulting implementation of policy training loss as follows:
\begin{equation}
\label{eqn: TDMPC25 policy loss}
    \mathcal{L}_{\pi} = -\underset{a \sim \pi}{\mathbb{E}} \left[Q(s, a)-\alpha \log\pi(a | s)+\beta\log\mu(a | s)\right]
\end{equation}
By decoupling the parameters $\alpha$ and $\beta$, we interpret this training objective as the lagrangian version of \eqref{eqn: policy target} with entropy regularization. It can also be seen as a combination of two key components: A maximum entropy objective \cite{SAC}, which promotes exploration and facilitates policy improvement, and a distribution regularization term that encourages the policy to select in-domain actions. To simplify the calculation, we can maximize $\mathbb{E}_{\mu' \sim \{\mu\}}[\log \mu']$ as the lower bound of $\log(\mu)$. 
%Furthermore, this objective can be understood as minimizing the reverse Kullback-Leibler (KL) divergence between $\pi$ and $\mu^* \propto \mu \exp(\frac{1}{\beta}Q)$, where $\mu^*$ represents the optimal solution of a constrained optimization problem \cite{AWR}

The proposed approach is a general modification compatible with most value-guided and planning-based MBRL algorithms, requiring only a simple adjustment to the policy improvement step. In the experiment section, we build our algorithm on \textbf{TD-MPC2}, leveraging \textbf{MPPI} to solve \eqref{eqn: planning procedure} in \textit{latent space}. The resulting algorithm is demonstrated in \cref{alg: TDMPC^2}. 

% TODO
\textbf{Balance Exploration and Exploitation.} In addition, we notice that addressing policy constraints during the initial stage sometimes results in failure to reach out to the local minima. Thus, we maintain moving percentiles of the Q function as in \cite{hansen2022TDMPC, hansen2023tdmpc2} to scale the training loss that does not enforce the regularization term until the percentile is greater than a small threshold.

\begin{algorithm}[tb]
   \caption{$\text{TD-M(PC)}^2$}
   \label{alg: TDMPC^2}
\begin{algorithmic}
   \REQUIRE $\pi_\theta$, $d_\psi$, $enc$,  $Q_{\phi}(s, a)$, $Q_{\phi'}(s, a)$, $P$, $\alpha$, $\beta$, $\rho$
    % \ENSURE Optimized parameters $\theta$, $\phi_1'$, $\phi_2'$
    %\FOR{each seed step}
    %    \STATE Uniformly sample actions to initialize buffer $\mathcal{B}$
    %\ENDFOR
    \STATE Initialize policy network $\pi_\theta$, latent world model $d_\psi$, encoder $enc$, and value functions $Q_{\phi}, Q_{\phi'}(s, a)$ by pertaining on uniformly sampled data.
    \FOR{each training step}
        \IF{collect data}
            \STATE Planning: $a \sim \mu = P(\pi_\theta, Q_\phi, d, enc(s))$.
            \STATE Environment step: $r, s', done = env.step(a)$
            \STATE Add $(s, a, \mu, r, s')$ to buffer $\mathcal{B}$.
        \ENDIF
        \STATE Sample trajectories $\{(s_t, a_t, \mu_t, r_t, s_{t+1})_{0:H}\} \sim \mathcal{B}$.
        \STATE \textbf{\# Model Update}
        \STATE Calculate TD target by bootstrapping $\pi_{\theta}$
        \STATE Update $d_\psi$, $enc$, $Q_\phi$ by \eqref{eqn: TDMPC2 model loss}
        \STATE \textbf{\# Constrained Policy Update}
        \STATE Calculate policy loss:
        \STATE $\mathcal{L}_\pi = -\underset{a \sim \pi}{\mathbb{E}} \left[Q(s, a)-\alpha \log\pi(a | s)+\beta\log\mu(a | s)\right]$
        \STATE \textbf{Polyak update} $\phi_i'=\rho\phi_i' + (1-\rho)\phi_i, \, i=1,2$
    \ENDFOR
\end{algorithmic}
\end{algorithm}

%===============================================================================

\begin{figure*}[t]
\vskip 0.2in
\begin{center}
\centerline{\includegraphics[width=\linewidth]{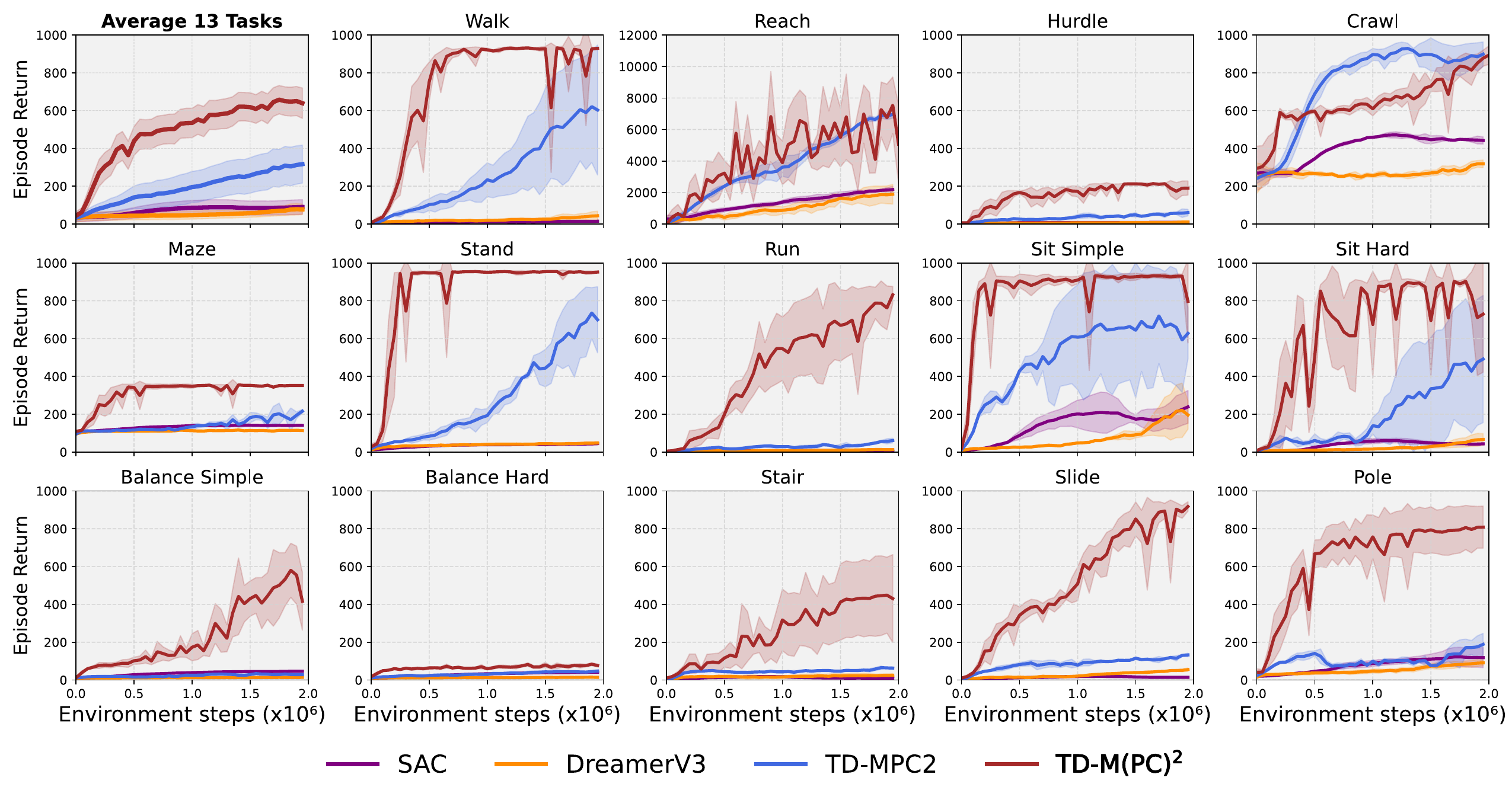}}
\caption{\textbf{Humanoid-Bench Locomotion Suite.} Average episode return of our method (TD-M(PC)$^2$) and baselines. We report mean performance and 95\% CIs across 14 humanoid locomotion tasks. We do not include \texttt{Reach-v0} in the average result due to its distinct reward scale.}
\label{fig: humanoidbench}
\end{center}
\vskip -0.2in
\end{figure*}

\begin{figure*}[t]
\vskip 0.2in
\begin{center}
\centerline{\includegraphics[width=0.8\linewidth]{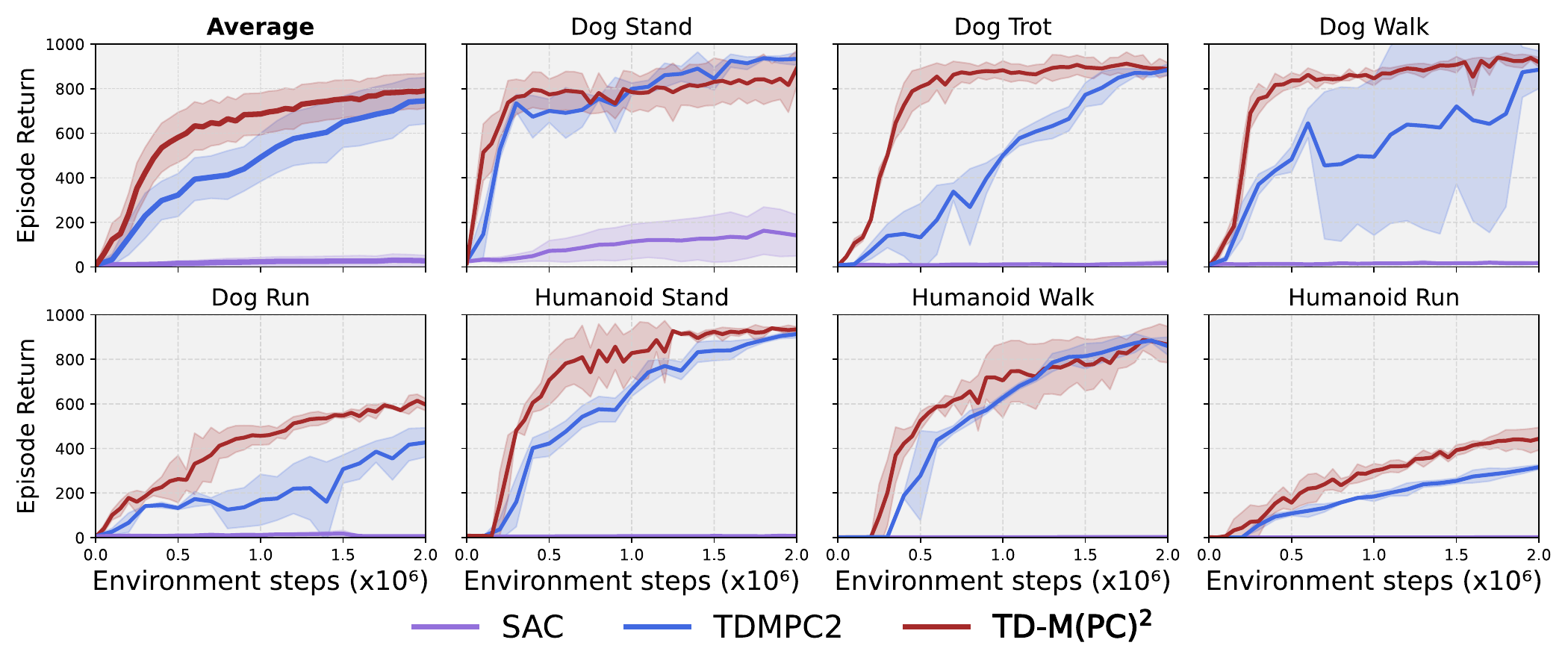}}
\caption{\textbf{DM Control Suite.} Average episode return of our method (TD-M(PC)$^2$) and baselines. We report mean performance and 95\% CIs across 7 high-dimensional continuous control tasks. We also present the average performance on all algorithms.}
\label{fig: DMC highdim}
\end{center}
\vskip -0.2in
\end{figure*}

\begin{figure}[ht]
\vskip 0.2in
\begin{center}
\centerline{\includegraphics[width=0.8\columnwidth]{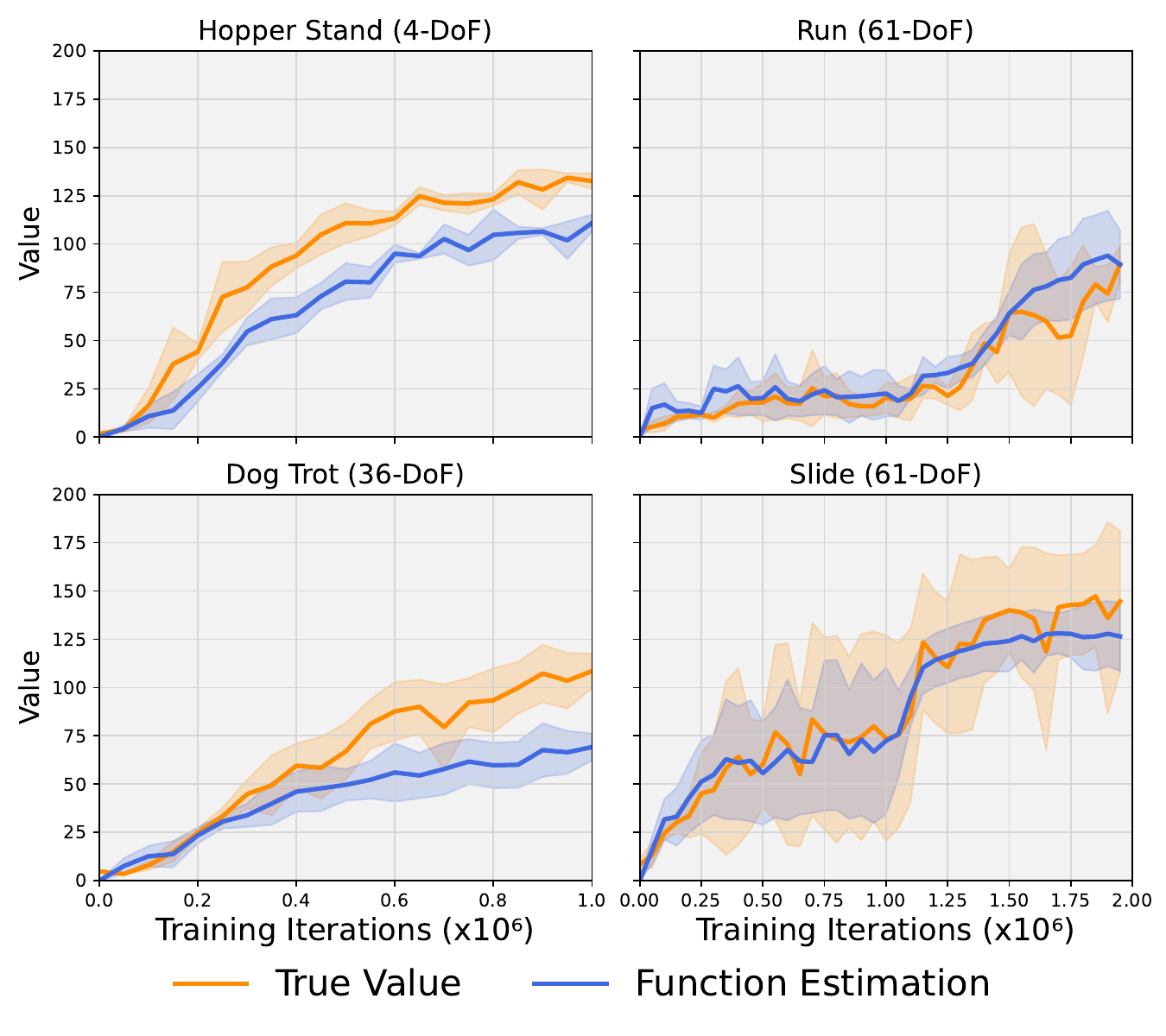}}
\caption{Value estimation of TD-M(PC)$^2$. The true value and function estimation are obtained with the same approach in Figure \ref{fig: approx error}. The proposed significantly mitigates value overestimation for all four tasks.}
\label{fig: value estimation of TDMPC^2}
\end{center}
\vskip -0.2in
\end{figure}

\begin{figure}[ht]
\vskip 0.2in
\begin{center}
\centerline{\includegraphics[width=0.8\columnwidth]{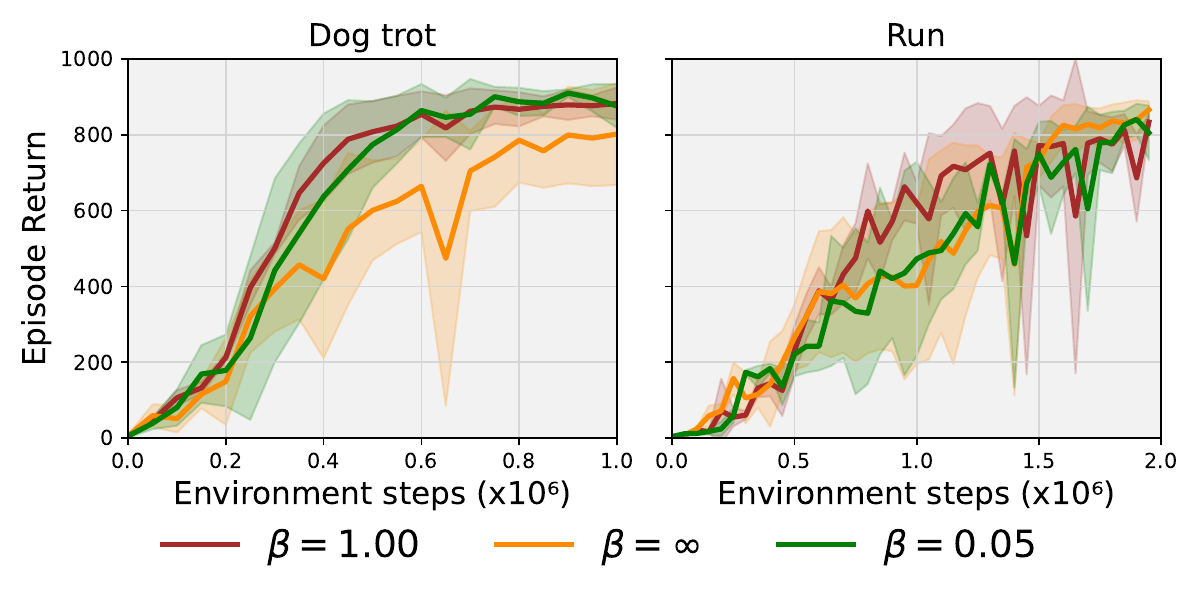}}
\caption{Ablation study on $\beta$. We evaluate all variants on two high-dimensional tasks from different domains: \texttt{dog-trot} and \texttt{h1hand-run-v0}. The results indicate that our method is not sensitive to $\beta$.}
\label{fig: ablation}
\end{center}
\vskip -0.2in
\end{figure}

\section{Experiments}
\label{sec: experiments}
Our experiments aim to assess whether the proposed method enhances the performance of Temporal Difference Model Predictive Control (TD-MPC) and aligns with our theoretical analysis. Specifically, we seek to answer the following key questions:
\begin{itemize}
    \item Does our method reduce value approximation error, leading to more accurate policy evaluation?
    \item Does our approach improve upon the \textit{state-of-the-art} TD-MPC2 algorithm in high-dimensional continuous control tasks?
    \item Which design component is most critical to the observed performance improvements in complex, high-dimensional tasks?
\end{itemize}

\textbf{Benchmarks.} 
To ensure a rigorous evaluation, we benchmark our method across 21 high-dimensional continuous control tasks drawn from HumanoidBench \cite{sferrazza2024humanoidbench} and the DeepMind Control Suite (DMControl) \cite{tassa2018deepmind}.
\textit{HumanoidBench} is a standardized suite for humanoid control. We test all 14 locomotion tasks in \texttt{h1hand-v0}, requiring whole-body control of a humanoid robot with dexterous hands and 61-dimensional action space, making these tasks particularly challenging. Additionally, we benchmark our method on seven high-dimensional \textbf{DMControl} tasks, including \texttt{dog} (36-DoF) and \texttt{humanoid} (21-DoF) environments. While these tasks are less complex, they provide a broader evaluation of our method’s generalization across diverse settings. By conducting experiments on these benchmark suites, we ensure a comprehensive and challenging evaluation of our method in both extreme high-dimensional and more regular continuous control settings.

\textbf{Baselines.} 
To evaluate the effectiveness of our method, we primarily compare it against TD-MPC2 \cite{hansen2023tdmpc2}. Moreover, we benchmark against leading model-based and model-free RL methods, including DreamerV3 \cite{dreamerv3} and Soft Actor-Critic (SAC) \cite{SAC}, both of which have demonstrated strong performance across various high-dimensional control problems. For a fair comparison, we use the official TD-MPC2 implementation and DMControl scores\footnote{\url{https://github.com/nicklashansen/tdmpc2}}, while reporting HumanoidBench results from author-provided scores\footnote{\url{https://github.com/carlosferrazza/humanoid-bench}}.

\subsection{Improved Value Learning}
As illustrated in Figures \ref{fig: value estimation of TDMPC^2} and \ref{fig: approx error}, empirically, our method enables the learned value function to more closely align with actual returns, reducing structural bias in value estimation, which justifies our theoretical analysis. By mitigating policy mismatch through conservative policy updates, our approach effectively reduces extrapolation error, leading to more reliable value learning. This improvement is particularly crucial in high-dimensional control tasks, where inaccuracies in value estimation can compound over time. We will show that this improvement in value learning ultimately results in more effective decision-making and greater confidence in the resulting $H$-step lookahead policy.

\subsection{Benchmark Performance}
We adopt the same settings for shared hyperparameters as those reported in the TD-MPC2 paper to our algorithm without any task-specific tuning. This consistency allows us to directly assess the adaptability and robustness of our approach across different tasks and environments. We provide a comprehensive list of hyperparameter settings for reproducibility and transparency in \ref{tab: hyperparams}. Our experiments are conducted across three random seeds for DMControl and five random seeds for HumanoidBench, with a total of two million environment steps of online data collection.

For HumanoidBench, as demonstrated in Figure~\ref{fig: humanoidbench}, our method consistently outperforms the baseline by a large margin for most tasks. In specific, we observe significant improvements in locomotion tasks, including \texttt{Run}, \texttt{Slide}, and \texttt{Pole}. In these tasks, the humanoid robot only needs to perform regular and consistent motion, such as walking and running, but under diverse scenarios and terrain. Intuitively, we expect these tasks to be easier to solve, but TD-MPC2 suffers from low data efficiency even compared to more complex goal-oriented locomotion tasks like \texttt{Reach}. We found TD-M(PC)$^2$ tends to perform cautiously, while the baseline demonstrates exaggerated motion corresponding to its overestimated value. In addition, we provide the average training curve for episode return. In terms of average performance by the end of the training, TD-M(PC)$^2$ improves TD-MPC2 by over $100\%$.

We also evaluate our approach on some of the most challenging tasks in the DMControl suite, with the training curves presented in Figure~\ref{fig: DMC highdim}. On average, $\text{TD-M(PC)}^2$ slightly outperforms the baseline. Notably, significant improvements are observed in three \texttt{dog} tasks, whereas the performance on \texttt{Humanoid} tasks, which feature a slightly smaller action space, remains comparable to the baseline. We visualize trajectories generated by our method for some tasks in \ref{fig: visualizations}.
%Additionally, we provide results on four low-dimensional control tasks in the Appendix, demonstrating that the improvements achieved by our approach in high-dimensional tasks do not come at the cost of performance on simpler tasks.

\subsection{Ablation Study}
\label{subsec: ablation study}
To better understand the impact of policy regularization, we evaluate two variants of TD-M(PC)$^2$: a mildly regularized version with a regularization coefficient of $\beta=0.05$ and an overly conservative variant that directly applies behavior cloning (BC) for policy updates. These variants are tested on two high-dimensional continuous control tasks; implementation details on the BC variant are provided in Appendix \ref{Algorithm formulation}.

As shown in Figure \ref{fig: ablation}, all three variants achieve similar performance, suggesting that the choice of regularization strength has a limited effect. Notably, despite the lack of value-based policy learning, the behavior cloning variant performs nearly on par with the others. This highlights the dominant role of conservatism in high-dimensional tasks, suggesting that reducing out-of-distribution actions is a key factor in improving stability and performance.
%===============================================================================

\section{Conclusions}
\label{conslusions}
In this paper, we identify and address a fundamental limitation in existing model-based reinforcement learning algorithms—persistent value overestimation caused by structural policy mismatch. Through both theoretical and empirical analysis, we demonstrate that standard policy iteration leads to compounding errors in value estimation. To mitigate this issue, we introduce a simple yet effective policy regularization term that reduces out-of-distribution queries. Our approach seamlessly integrates into existing frameworks and significantly enhances performance, particularly in challenging high-dimensional tasks such as 61-DoF humanoid control. These findings underscore the importance of conservative exploitation of planner-generated data.

\bibliographystyle{abbrv}
\bibliography{TDMPC_square}

\begin{thebibliography}{10}

\bibitem{MBOP}
A.~Argenson and G.~Dulac-Arnold.
\newblock Model-based offline planning.
\newblock {\em arXiv preprint arXiv:2008.05556}, 2020.

\bibitem{bertsekas1996neuro}
D.~Bertsekas.
\newblock Neuro-dynamic programming.
\newblock {\em Athena Scientific}, 1996.

\bibitem{bhardwaj2020information}
M.~Bhardwaj, A.~Handa, D.~Fox, and B.~Boots.
\newblock Information theoretic model predictive q-learning.
\newblock In {\em Learning for Dynamics and Control}, pages 840--850. PMLR, 2020.

\bibitem{chan2022greedification}
A.~Chan, H.~Silva, S.~Lim, T.~Kozuno, A.~R. Mahmood, and M.~White.
\newblock Greedification operators for policy optimization: Investigating forward and reverse kl divergences.
\newblock {\em Journal of Machine Learning Research}, 23(253):1--79, 2022.

\bibitem{PETS}
K.~Chua, R.~Calandra, R.~McAllister, and S.~Levine.
\newblock Deep reinforcement learning in a handful of trials using probabilistic dynamics models.
\newblock {\em Advances in neural information processing systems}, 31, 2018.

\bibitem{td3bc}
S.~Fujimoto and S.~S. Gu.
\newblock A minimalist approach to offline reinforcement learning.
\newblock {\em Advances in neural information processing systems}, 34:20132--20145, 2021.

\bibitem{TD3}
S.~Fujimoto, H.~Hoof, and D.~Meger.
\newblock Addressing function approximation error in actor-critic methods.
\newblock In {\em International conference on machine learning}, pages 1587--1596. PMLR, 2018.

\bibitem{BCQ}
S.~Fujimoto, D.~Meger, and D.~Precup.
\newblock Off-policy deep reinforcement learning without exploration.
\newblock In {\em International conference on machine learning}, pages 2052--2062. PMLR, 2019.

\bibitem{XQL}
D.~Garg, J.~Hejna, M.~Geist, and S.~Ermon.
\newblock Extreme q-learning: Maxent rl without entropy.
\newblock {\em arXiv preprint arXiv:2301.02328}, 2023.

\bibitem{SAC}
T.~Haarnoja, A.~Zhou, P.~Abbeel, and S.~Levine.
\newblock Soft actor-critic: Off-policy maximum entropy deep reinforcement learning with a stochastic actor.
\newblock In {\em International conference on machine learning}, pages 1861--1870. PMLR, 2018.

\bibitem{PlaNet}
D.~Hafner, T.~Lillicrap, I.~Fischer, R.~Villegas, D.~Ha, H.~Lee, and J.~Davidson.
\newblock Learning latent dynamics for planning from pixels.
\newblock In {\em International conference on machine learning}, pages 2555--2565. PMLR, 2019.

\bibitem{deramerv2}
D.~Hafner, T.~Lillicrap, M.~Norouzi, and J.~Ba.
\newblock Mastering atari with discrete world models.
\newblock {\em arXiv preprint arXiv:2010.02193}, 2020.

\bibitem{dreamerv3}
D.~Hafner, J.~Pasukonis, J.~Ba, and T.~Lillicrap.
\newblock Mastering diverse domains through world models.
\newblock {\em arXiv preprint arXiv: 2301.04104}, 2023.

\bibitem{hansen2023tdmpc2}
N.~Hansen, H.~Su, and X.~Wang.
\newblock Td-mpc2: Scalable, robust world models for continuous control.
\newblock {\em arXiv preprint arXiv:2310.16828}, 2023.

\bibitem{hansen2022TDMPC}
N.~Hansen, X.~Wang, and H.~Su.
\newblock Temporal difference learning for model predictive control.
\newblock {\em arXiv preprint arXiv:2203.04955}, 2022.

\bibitem{hansen2023idql}
P.~Hansen-Estruch, I.~Kostrikov, M.~Janner, J.~G. Kuba, and S.~Levine.
\newblock Idql: Implicit q-learning as an actor-critic method with diffusion policies.
\newblock {\em arXiv preprint arXiv:2304.10573}, 2023.

\bibitem{MBPO}
M.~Janner, J.~Fu, M.~Zhang, and S.~Levine.
\newblock When to trust your model: Model-based policy optimization.
\newblock {\em Advances in neural information processing systems}, 32, 2019.

\bibitem{kabzan2019learning}
J.~Kabzan, L.~Hewing, A.~Liniger, and M.~N. Zeilinger.
\newblock Learning-based model predictive control for autonomous racing.
\newblock {\em IEEE Robotics and Automation Letters}, 4(4):3363--3370, 2019.

\bibitem{katsigiannis2017dreamer}
S.~Katsigiannis and N.~Ramzan.
\newblock Dreamer: A database for emotion recognition through eeg and ecg signals from wireless low-cost off-the-shelf devices.
\newblock {\em IEEE journal of biomedical and health informatics}, 22(1):98--107, 2017.

\bibitem{IQL}
I.~Kostrikov, A.~Nair, and S.~Levine.
\newblock Offline reinforcement learning with implicit q-learning.
\newblock {\em arXiv preprint arXiv:2110.06169}, 2021.

\bibitem{BEAR}
A.~Kumar, J.~Fu, M.~Soh, G.~Tucker, and S.~Levine.
\newblock Stabilizing off-policy q-learning via bootstrapping error reduction.
\newblock {\em Advances in neural information processing systems}, 32, 2019.

\bibitem{CQL}
A.~Kumar, A.~Zhou, G.~Tucker, and S.~Levine.
\newblock Conservative q-learning for offline reinforcement learning.
\newblock {\em Advances in Neural Information Processing Systems}, 33:1179--1191, 2020.

\bibitem{levine2020offline}
S.~Levine, A.~Kumar, G.~Tucker, and J.~Fu.
\newblock Offline reinforcement learning: Tutorial, review, and perspectives on open problems.
\newblock {\em arXiv preprint arXiv:2005.01643}, 2020.

\bibitem{li2025robotic}
C.~Li, A.~Krause, and M.~Hutter.
\newblock Robotic world model: A neural network simulator for robust policy optimization in robotics.
\newblock {\em arXiv preprint arXiv:2501.10100}, 2025.

\bibitem{littman1996reinforcement}
M.~Littman and A.~Moore.
\newblock Reinforcement learning: A survey, journal of artificial intelligence research 4, 1996.

\bibitem{lowrey2018POLO}
K.~Lowrey, A.~Rajeswaran, S.~Kakade, E.~Todorov, and I.~Mordatch.
\newblock Plan online, learn offline: Efficient learning and exploration via model-based control.
\newblock {\em arXiv preprint arXiv:1811.01848}, 2018.

\bibitem{bc-sac}
Y.~Lu, J.~Fu, G.~Tucker, X.~Pan, E.~Bronstein, R.~Roelofs, B.~Sapp, B.~White, A.~Faust, S.~Whiteson, et~al.
\newblock Imitation is not enough: Robustifying imitation with reinforcement learning for challenging driving scenarios.
\newblock In {\em 2023 IEEE/RSJ International Conference on Intelligent Robots and Systems (IROS)}, pages 7553--7560. IEEE, 2023.

\bibitem{munos2007performance}
R.~Munos.
\newblock Performance bounds in l\_p-norm for approximate value iteration.
\newblock {\em SIAM journal on control and optimization}, 46(2):541--561, 2007.

\bibitem{nair2020awac}
A.~Nair, A.~Gupta, M.~Dalal, and S.~Levine.
\newblock Awac: Accelerating online reinforcement learning with offline datasets.
\newblock {\em arXiv preprint arXiv:2006.09359}, 2020.

\bibitem{cal-QL}
M.~Nakamoto, S.~Zhai, A.~Singh, M.~Sobol~Mark, Y.~Ma, C.~Finn, A.~Kumar, and S.~Levine.
\newblock Cal-ql: Calibrated offline rl pre-training for efficient online fine-tuning.
\newblock {\em Advances in Neural Information Processing Systems}, 36, 2024.

\bibitem{AWR}
X.~B. Peng, A.~Kumar, G.~Zhang, and S.~Levine.
\newblock Advantage-weighted regression: Simple and scalable off-policy reinforcement learning.
\newblock {\em arXiv preprint arXiv:1910.00177}, 2019.

\bibitem{TRPO}
J.~Schulman.
\newblock Trust region policy optimization.
\newblock {\em arXiv preprint arXiv:1502.05477}, 2015.

\bibitem{sferrazza2024humanoidbench}
C.~Sferrazza, D.-M. Huang, X.~Lin, Y.~Lee, and P.~Abbeel.
\newblock Humanoidbench: Simulated humanoid benchmark for whole-body locomotion and manipulation.
\newblock {\em arXiv preprint arXiv:2403.10506}, 2024.

\bibitem{LOOP}
H.~Sikchi, W.~Zhou, and D.~Held.
\newblock Learning off-policy with online planning.
\newblock In {\em Conference on Robot Learning}, pages 1622--1633. PMLR, 2022.

\bibitem{singh1994upper}
S.~P. Singh and R.~C. Yee.
\newblock An upper bound on the loss from approximate optimal-value functions.
\newblock {\em Machine Learning}, 16:227--233, 1994.

\bibitem{sutton1991dyna}
R.~S. Sutton.
\newblock Dyna, an integrated architecture for learning, planning, and reacting.
\newblock {\em ACM Sigart Bulletin}, 2(4):160--163, 1991.

\bibitem{sutton2018reinforcement}
R.~S. Sutton and A.~G. Barto.
\newblock {\em Reinforcement learning: An introduction}.
\newblock MIT press, 2018.

\bibitem{tassa2018deepmind}
Y.~Tassa, Y.~Doron, A.~Muldal, T.~Erez, Y.~Li, D.~d.~L. Casas, D.~Budden, A.~Abdolmaleki, J.~Merel, A.~Lefrancq, et~al.
\newblock Deepmind control suite.
\newblock {\em arXiv preprint arXiv:1801.00690}, 2018.

\bibitem{thrun2014issues}
S.~Thrun and A.~Schwartz.
\newblock Issues in using function approximation for reinforcement learning.
\newblock In {\em Proceedings of the 1993 connectionist models summer school}, pages 255--263. Psychology Press, 2014.

\bibitem{williams2017MPPI}
G.~Williams, N.~Wagener, B.~Goldfain, P.~Drews, J.~M. Rehg, B.~Boots, and E.~A. Theodorou.
\newblock Information theoretic mpc for model-based reinforcement learning.
\newblock In {\em 2017 IEEE international conference on robotics and automation (ICRA)}, pages 1714--1721. IEEE, 2017.

\bibitem{DMPC}
G.~Zhou, S.~Swaminathan, R.~V. Raju, J.~S. Guntupalli, W.~Lehrach, J.~Ortiz, A.~Dedieu, M.~L{\'a}zaro-Gredilla, and K.~Murphy.
\newblock Diffusion model predictive control.
\newblock {\em arXiv preprint arXiv:2410.05364}, 2024.

\end{thebibliography}

\newpage
\appendix
\onecolumn
\section{Theory and Discussion}
\label{Appendix: Theory}

\subsection{Useful Lemma}

\iffalse
\begin{lemma}[Q-error amplification]
\label{lemma: A Q-error amplification}
    Suppose we have an approximate action value function $Q: S \times A \rightarrow \mathbb{R}$ s.t.
    $$\lVert Q^* - Q\rVert_{\infty} \leq \epsilon_q,$$
    Then the performance of the 1-step greedy policy $\pi_{Q}$ can be bounded as:
    \begin{equation}
    \label{eqn: Q-error amplification}
        V^*(s) - V^{\pi_Q}(s) \leq \frac{1}{1 - \gamma} [2\epsilon_q]
    \end{equation}
\end{lemma}
\begin{proof}
    For any state $s$ and given value function $Q$, denote $a_t=\pi_{Q}(s)$ where $\pi_{Q}$ is the argmax policy, then we have:
    \[
    \begin{aligned}
        V^*(s_t) - V^{\pi_Q}(s_t)
        &= Q^*(s_t, \pi^*(s_t)) - Q(s_t, a_t) \\
        &= Q^*(s_t, \pi^*(s_t)) - Q^*(s_t, a_t) + Q^*(s_t, a_t) - Q(s_t, a_t) \\
        &\leq Q^*(s_t, \pi^*(s_t)) - Q(s_t, \pi^*(s_t)) + Q(s_t, a_t) - Q^*(s_t, a_t)\\ 
        &\quad+ \gamma\mathbb{E}_{s_{t+1} \sim P(\cdot | s_t, a_t)}[V^*(s_{t+1}) - V^{\pi_Q}(s_{t+1})] \\
        &\leq 2\Vert Q^* - Q \Vert_{\infty} + \gamma\mathbb{E}_{s_{t+1} \sim P(\cdot | s_t, a_t)}[V^*(s_{t+1}) - V^{\pi_Q}(s_{t+1})] \\
        &\leq  \frac{1}{1 - \gamma} [2\epsilon_q]
    \end{aligned}
    \]
    The final inequality holds due to recursive error propagation.
\end{proof}
\fi

\begin{lemma}[]
\label{lemma: A greedy policy performance bound}
    \cite{singh1994upper} Suppose $\pi_{k+1}$ is 1-step greedy policy of value function $\hat{V}_k$. Denote $V^*$ as the optimal value function, if there exists $\epsilon$ such that $\Vert V^* - \hat{V}_k \Vert_\infty \leq \xi_k$, we can bound the value loss of $\pi$ by:
    \begin{equation}
        V^* - V^{\pi_{k+1}} \leq \frac{2\gamma \xi_k}{1 - \gamma}
    \end{equation}
\end{lemma}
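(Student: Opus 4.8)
The plan is to prove the bound pointwise in the state and then pass to the supremum, exploiting that the one-step greedy choice is optimal with respect to the approximate value $\hat{V}_k$. To set up, I would introduce the state-action values induced by $V^*$ and $\hat{V}_k$, namely $Q^*(s,a) = r(s,a) + \gamma\,\mathbb{E}_{s'\sim P(\cdot|s,a)}[V^*(s')]$ and $\hat{Q}_k(s,a) = r(s,a) + \gamma\,\mathbb{E}_{s'\sim P(\cdot|s,a)}[\hat{V}_k(s')]$, so that the greedy policy satisfies $\pi_{k+1}(s) = \arg\max_a \hat{Q}_k(s,a)$, while $V^*(s) = \max_a Q^*(s,a) = Q^*(s,\pi^*(s))$.

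First I would fix a state $s$, write $a = \pi_{k+1}(s)$ and $a^* = \pi^*(s)$, and decompose
\begin{equation}
V^*(s) - V^{\pi_{k+1}}(s) = \big[Q^*(s,a^*) - Q^*(s,a)\big] + \gamma\,\mathbb{E}_{s'\sim P(\cdot|s,a)}\big[V^*(s') - V^{\pi_{k+1}}(s')\big],
\end{equation}
which follows by adding and subtracting $Q^*(s,a)$ and using $V^{\pi_{k+1}}(s) = r(s,a) + \gamma\,\mathbb{E}_{s'|s,a}[V^{\pi_{k+1}}(s')]$. The second term is the recursive part that will generate the $\tfrac{1}{1-\gamma}$ factor; the first term, the greedy suboptimality gap, is what must be bounded by $2\gamma\xi_k$.

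The core step is bounding $Q^*(s,a^*) - Q^*(s,a)$. I would insert the approximate values by writing it as $[Q^*(s,a^*) - \hat{Q}_k(s,a^*)] + [\hat{Q}_k(s,a^*) - \hat{Q}_k(s,a)] + [\hat{Q}_k(s,a) - Q^*(s,a)]$. The middle bracket is nonpositive since $a$ maximizes $\hat{Q}_k(s,\cdot)$, so it can be dropped. For the two remaining brackets the reward terms cancel, leaving $Q^*(s,a') - \hat{Q}_k(s,a') = \gamma\,\mathbb{E}_{s'|s,a'}[V^*(s') - \hat{V}_k(s')]$, whose absolute value is at most $\gamma\|V^* - \hat{V}_k\|_\infty \le \gamma\xi_k$ uniformly in $a'$. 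Hence the gap is at most $2\gamma\xi_k$, the factor of two coming from evaluating the error at both $a^*$ and $a$, and the factor $\gamma$ from the cancellation of the exact, shared reward.

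Combining the two steps gives $V^*(s) - V^{\pi_{k+1}}(s) \le 2\gamma\xi_k + \gamma\,\mathbb{E}_{s'|s,a}[V^*(s') - V^{\pi_{k+1}}(s')]$; taking the supremum over $s$ on both sides yields $\|V^* - V^{\pi_{k+1}}\|_\infty \le 2\gamma\xi_k + \gamma\|V^* - V^{\pi_{k+1}}\|_\infty$, and solving for the norm gives the claimed $\tfrac{2\gamma\xi_k}{1-\gamma}$. I expect the main obstacle to be purely the bookkeeping of signs in the telescoping decomposition, in particular correctly identifying that the greedy-optimality bracket is the one that vanishes and that the reward terms cancel so that only $\gamma$, not $1$, multiplies $\xi_k$. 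An equivalent and perhaps cleaner route that sidesteps this is to argue through the Bellman operators $T^*$ and $T^{\pi_{k+1}}$: use $T^{\pi_{k+1}}\hat{V}_k = T^*\hat{V}_k$ by greediness, and apply the $\gamma$-contraction of both operators together with the triangle inequality $\|\hat{V}_k - V^{\pi_{k+1}}\|_\infty \le \xi_k + \|V^* - V^{\pi_{k+1}}\|_\infty$.
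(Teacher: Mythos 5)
Your proposal is correct, and it is essentially the canonical argument: the paper itself gives no proof of this lemma, deferring entirely to the citation of Singh and Yee (1994), and your pointwise decomposition---greedy optimality of $a$ with respect to $\hat{Q}_k$ killing the middle bracket, cancellation of the shared reward yielding the factor $\gamma$ (not $1$) on each of the two error terms, and the recursion generating the $\frac{1}{1-\gamma}$ factor after taking suprema (valid since $V^* \geq V^{\pi_{k+1}}$ pointwise)---is precisely the proof in that reference. Your alternative route via $T^{\pi_{k+1}}\hat{V}_k = T^*\hat{V}_k$ and $\gamma$-contraction is also sound and arrives at the same constant, so there is nothing to flag beyond noting that the paper's statement contains a harmless typo ($\epsilon$ where $\xi_k$ is meant), which you correctly read past.
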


\bigskip

\begin{lemma}[]
\label{lemma: A approx error propagation}
    \cite{bertsekas1996neuro} Suppose $\{\pi_k\}$ is policy sequence generated by approximate policy iteration (API), then the maximum norm of value loss can be bounded as:
    \begin{equation}
        \limsup_{k\rightarrow\infty} \Vert V^* - V^{\pi_k}\Vert_\infty 
        \leq \frac{2\gamma}{(1-\gamma)^2} \limsup_{k\rightarrow\infty} \Vert \hat{V}_k - V^{\pi_k} \Vert_\infty
    \end{equation}
\end{lemma}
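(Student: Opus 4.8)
The plan is to derive a one-step recursion of the form $L_{k+1} \le \gamma L_k + \tfrac{2\gamma\epsilon_k}{1-\gamma}$, where $L_k \coloneqq \Vert V^* - V^{\pi_k}\Vert_\infty$ and $\epsilon_k \coloneqq \Vert \hat{V}_k - V^{\pi_k}\Vert_\infty$, and then pass to $\limsup$. Throughout I would work with the policy Bellman operator $T^\pi V = r^\pi + \gamma P^\pi V$ and the optimality operator $T V = \max_\pi T^\pi V$, using their three standard properties: monotonicity, the $\gamma$-contraction in $\Vert\cdot\Vert_\infty$, and the exact identity $T^\pi V_1 - T^\pi V_2 = \gamma P^\pi(V_1 - V_2)$. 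I take $\pi_{k+1}$ to be the exact greedy policy with respect to $\hat{V}_k$, so that $T^{\pi_{k+1}}\hat{V}_k = T\hat{V}_k \ge T^{\pi}\hat{V}_k$ for every $\pi$; this exactness is what makes only the evaluation error $\epsilon_k$ enter the final constant.

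First I would establish approximate monotonic improvement, namely $V^{\pi_{k+1}} \ge V^{\pi_k} - \tfrac{2\gamma\epsilon_k}{1-\gamma}\mathbf{1}$. Chaining monotonicity with the greedy identity gives $T^{\pi_{k+1}} V^{\pi_k} \ge T^{\pi_{k+1}}\hat{V}_k - \gamma\epsilon_k\mathbf{1} = T\hat{V}_k - \gamma\epsilon_k\mathbf{1} \ge T V^{\pi_k} - 2\gamma\epsilon_k\mathbf{1} \ge V^{\pi_k} - 2\gamma\epsilon_k\mathbf{1}$, where the last step uses $T V^{\pi_k} \ge T^{\pi_k} V^{\pi_k} = V^{\pi_k}$. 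Iterating the monotone operator $T^{\pi_{k+1}}$ on this inequality and summing the geometric series in $\gamma$ yields the claimed lower bound once the iterate converges to the fixed point $V^{\pi_{k+1}}$.

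Next I would bound $V^* - V^{\pi_{k+1}}$ by writing it as $T^{\pi^*} V^* - T^{\pi_{k+1}} V^{\pi_{k+1}}$ and inserting $\hat{V}_k$, decomposing into $(T^{\pi^*} V^* - T^{\pi^*}\hat{V}_k) + (T^{\pi^*}\hat{V}_k - T^{\pi_{k+1}}\hat{V}_k) + (T^{\pi_{k+1}}\hat{V}_k - T^{\pi_{k+1}} V^{\pi_{k+1}})$. The middle term is $\le 0$ by the greedy property; the first equals $\gamma P^{\pi^*}(V^* - \hat{V}_k)$ and is controlled by $\gamma(L_k + \epsilon_k)\mathbf{1}$ via the triangle inequality $\Vert V^* - \hat{V}_k\Vert_\infty \le L_k + \epsilon_k$. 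The third term, $\gamma P^{\pi_{k+1}}(\hat{V}_k - V^{\pi_{k+1}})$, is where the approximate-monotonicity step is needed: I would bound $\hat{V}_k - V^{\pi_{k+1}} \le (\hat{V}_k - V^{\pi_k}) + (V^{\pi_k} - V^{\pi_{k+1}}) \le \epsilon_k\mathbf{1} + \tfrac{2\gamma\epsilon_k}{1-\gamma}\mathbf{1} = \tfrac{1+\gamma}{1-\gamma}\epsilon_k\mathbf{1}$ and then apply the stochastic matrix $P^{\pi_{k+1}}$. Collecting terms and simplifying the coefficient of $\epsilon_k$ to $\tfrac{2}{1-\gamma}$ gives exactly $L_{k+1} \le \gamma L_k + \tfrac{2\gamma\epsilon_k}{1-\gamma}$.

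Finally, taking $\limsup_k$ of the recursion and using $\limsup_k L_{k+1} = \limsup_k L_k$ together with subadditivity of $\limsup$ produces $L \le \gamma L + \tfrac{2\gamma}{1-\gamma}\limsup_k\epsilon_k$, which rearranges to the stated bound $\tfrac{2\gamma}{(1-\gamma)^2}\limsup_k\epsilon_k$. The main obstacle is the bridging term $\hat{V}_k - V^{\pi_{k+1}}$, which couples two consecutive policies and cannot be absorbed directly into $\epsilon_k$; reconciling it requires the approximate monotonic improvement bound established in the first step, and it is precisely this reconciliation that produces the tight $\tfrac{2}{1-\gamma}$ factor rather than a looser constant.
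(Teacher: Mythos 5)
Your proposal is correct. Note that the paper does not actually prove this lemma: it is imported verbatim from Bertsekas and Tsitsiklis (their error bound for approximate policy iteration, Proposition 6.2 in the cited book), so there is no in-paper proof to compare against; what you have written is a sound, self-contained reconstruction of essentially the classical argument. Each step checks out: the approximate-improvement chain $T^{\pi_{k+1}}V^{\pi_k} \ge T^{\pi_{k+1}}\hat V_k - \gamma\epsilon_k\mathbf 1 = T\hat V_k - \gamma\epsilon_k\mathbf 1 \ge TV^{\pi_k} - 2\gamma\epsilon_k\mathbf 1 \ge V^{\pi_k} - 2\gamma\epsilon_k\mathbf 1$ is valid, iterating $T^{\pi_{k+1}}$ correctly yields $V^{\pi_{k+1}} \ge V^{\pi_k} - \frac{2\gamma\epsilon_k}{1-\gamma}\mathbf 1$, the three-term decomposition of $T^{\pi^*}V^* - T^{\pi_{k+1}}V^{\pi_{k+1}}$ is handled correctly (middle term nonpositive by greediness, and the coefficient arithmetic $\gamma\bigl(1 + \frac{1+\gamma}{1-\gamma}\bigr) = \frac{2\gamma}{1-\gamma}$ is right), and since $V^* \ge V^{\pi_{k+1}}$ the componentwise bound converts to the recursion $L_{k+1} \le \gamma L_k + \frac{2\gamma\epsilon_k}{1-\gamma}$, whose $\limsup$ gives the stated constant $\frac{2\gamma}{(1-\gamma)^2}$. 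Two mild caveats you should make explicit: first, you assume the greedy step of API is exact, which is what the lemma's statement implicitly requires (the full Bertsekas--Tsitsiklis result carries an additional greedification-error term $\delta/(1-\gamma)^2$ that you have set to zero); second, rearranging $L \le \gamma L + \frac{2\gamma}{1-\gamma}\limsup_k \epsilon_k$ requires $\limsup_k L_k < \infty$, which holds under the bounded-reward assumption $r \in [0, R_{\max}]$ used throughout the paper since $L_k \le \frac{2R_{\max}}{1-\gamma}$. A small bonus of your route: by keeping $\epsilon_k$ per-iteration inside the recursion and using subadditivity of $\limsup$, you directly obtain the paper's $\limsup_k \epsilon_k$ form of the bound rather than the uniform-$\epsilon$ version stated in the original reference.
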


\begin{lemma}[]
\label{lemma: A loop dependency}
    \cite{LOOP} Denote approximation error for dynamics model $\hat{\rho}$ as $\epsilon_m = \max_{s, a} D_{TV}(\rho(\cdot | s_t, a_t) \Vert \hat{\rho}(\cdot | s_t, a_t))$. Denote $\epsilon_p$ as suboptimality incurred in H-step lookahead optimization such that $J^* - \hat{J} \leq \epsilon_p$. Let $\hat{V}$ be an approximate value function such that $\Vert V^* - \hat{V}\Vert_{\infty} \leq \xi$. Also let the reward function $r$ is bounded by $[0, R_\text{max}]$ and $\hat{V}$ be bounded by $[0, V_\text{max}]$. Then, the performance of the H-step lookahead policy can be bounded as:
    \begin{equation}
    \label{eqn: A loop dependency}
    \begin{split}
        J^{\pi^*} - J^{\pi_H} 
        \leq \frac{2}{1 -\gamma^H}\Big[ C(\epsilon_m, H, \gamma) + \frac{\epsilon_{p}}{2} + \gamma^H \xi \Big]
    \end{split}
    \end{equation}
    while $C$ is defined as:
    \begin{equation}
    \label{eqn: A C def}
        C(\epsilon_m, H, \gamma) = R_{\text{max}} \sum_{t=0}^{H-1} \gamma^t t \epsilon_m + \gamma^H H \epsilon_m V_{\text{max}}
    \end{equation}
\end{lemma}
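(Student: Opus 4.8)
The plan is to treat the $H$-step lookahead as a single $\gamma^H$-contracting Bellman backup whose estimation error I control separately, following the template of approximate multi-step greedy policy analysis. First I would define, for a fixed open-loop action sequence $a_{0:H-1}$ starting from $s$, two $H$-step action values: the idealized one $\tilde Q_H$ that rolls out under the true dynamics $\rho$ and bootstraps with $V^*$, and the surrogate $\hat Q_H$ that the planner actually optimizes, which rolls out under the learned model $\hat\rho$ and bootstraps with $\hat V$. The argument then reduces to (i) bounding $|\tilde Q_H - \hat Q_H|$ uniformly and (ii) converting that uniform $Q$-error, together with the planner suboptimality $\epsilon_p$, into a performance bound through the contraction factor.

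For step (i) I would run a simulation-lemma argument. The key quantity is the total-variation gap between the open-loop state distributions at depth $h$ under the two models, which I would bound inductively by $h\epsilon_m$: each step adds at most $\max_{s,a} D_{TV}(\rho\Vert\hat\rho)=\epsilon_m$, so after $h$ steps the gap is at most $h\epsilon_m$. Feeding this into the reward terms (each reward in $[0,R_{\text{max}}]$) gives a per-step discrepancy of at most $R_{\text{max}}\,t\,\epsilon_m$ at depth $t$, and summing with discount yields the first term of $C$. The terminal term splits as $\gamma^H\big(|\mathbb{E}_{\hat\rho}[\hat V - V^*]| + |\mathbb{E}_{\hat\rho}[V^*]-\mathbb{E}_{\rho}[V^*]|\big)$: the first piece is at most $\gamma^H\xi$ by assumption, and the second at most $\gamma^H H\epsilon_m V_{\text{max}}$ by the depth-$H$ TV bound together with $\hat V, V^* \le V_{\text{max}}$. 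Collecting these gives $|\tilde Q_H - \hat Q_H| \le C(\epsilon_m, H, \gamma) + \gamma^H\xi$ uniformly over $s$ and $a_{0:H-1}$.

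For step (ii) I would invoke the $H$-step generalization of Lemma~\ref{lemma: A greedy policy performance bound}: the map sending a terminal value to the value of its exact $H$-step greedy policy is a $\gamma^H$-contraction toward $V^*$, so a uniform error $\eta$ in the surrogate $Q$-function the planner maximizes propagates to a performance loss of at most $\frac{2\eta}{1-\gamma^H}$, the factor two coming from the two-sided comparison against $\pi^*$. Taking $\eta = C(\epsilon_m,H,\gamma)+\gamma^H\xi$ and noting that the one-sided planner error $\epsilon_p$ enters through a single contraction as $\frac{\epsilon_p}{1-\gamma^H} = \frac{2}{1-\gamma^H}\cdot\frac{\epsilon_p}{2}$, I can absorb it as the $\frac{\epsilon_p}{2}$ term inside the bracket, recovering exactly \eqref{eqn: A loop dependency}.

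I expect the main obstacle to be step (i): landing the model-error accumulation on the precise weighting $\sum_{t=0}^{H-1}\gamma^t t\,\epsilon_m$ rather than a looser $\frac{1}{1-\gamma}\sum\gamma^t\epsilon_m$ bound. This hinges on using the linear-in-depth TV growth $h\epsilon_m$ (a telescoping/coupling estimate) instead of a geometric-series argument, and on being consistent about the $D_{TV}$ normalization so that the reward and terminal constants come out with coefficient $1$ rather than $2$. Since the statement is quoted from LOOP, I would cross-check the final constants against their Theorem~1 once the skeleton is in place.
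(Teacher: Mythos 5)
Your proposal is correct and follows essentially the same route as the paper's (i.e., LOOP's Theorem~1) argument, which the paper only cites and sketches inside Theorem~\ref{thm: A H-step suboptim v approx error}: a simulation-lemma bound with linear-in-depth TV growth $h\epsilon_m$ producing $C(\epsilon_m,H,\gamma)$, a two-sided terminal value error contributing $\gamma^H\xi$ along each of the two compared trajectories (the source of the factor $2$), planner suboptimality entering once as $\epsilon_p$, and the recursion at $s_H$ via the term $\gamma^H\,\mathbb{E}_{\hat\tau}\left[V^*(s_H)-V^{\pi_H}(s_H)\right]$ yielding the $\frac{1}{1-\gamma^H}$ factor. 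One bookkeeping note: in your terminal split, apply the depth-$H$ TV bound to $\hat V$ (which is bounded by $V_{\text{max}}$ by hypothesis) rather than to $V^*$ (for which no $V_{\text{max}}$ bound is assumed), i.e., decompose as $\gamma^H\big(\mathbb{E}_{\hat\rho}[\hat V]-\mathbb{E}_{\rho}[\hat V]\big)+\gamma^H\,\mathbb{E}_{\rho}\big[\hat V-V^*\big]$, which leaves your constants and the final bound unchanged.
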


\bigskip

\subsection{Proof of Theorem}

\begin{theorem}[Policy divergence]
\label{thm: A Policy divergence}
    Given policies $\pi, \pi' \in \Pi:S \rightarrow A$, suppose reward is upper bounded by $R_{\text{max}}$, then we have policy divergence lower bounded by performance gap as:
    \begin{equation}
        \max_s D_{TV}\left(\pi'(a | s) \Vert \pi(a | s)\right) \geq \frac{(1 - \gamma)^2}{2 R_{\text{max}}} \vert J^{\pi} - J^{\pi'} \vert
    \end{equation}
\end{theorem}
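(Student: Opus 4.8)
The plan is to establish the equivalent upper bound $|J^\pi - J^{\pi'}| \le \frac{2R_{\text{max}}}{(1-\gamma)^2}\,\max_s D_{TV}(\pi'(\cdot|s)\|\pi(\cdot|s))$, which rearranges directly into the stated inequality. I would work at the level of the state-value functions $V^\pi$ and $V^{\pi'}$ and control their worst-case gap $\Delta \doteq \|V^\pi - V^{\pi'}\|_\infty$ by a one-step recursion, then transfer the bound to the objective via $J^\pi = \mathbb{E}_{s_0\sim\rho_0}[V^\pi(s_0)]$, so that $|J^\pi - J^{\pi'}| \le \Delta$.

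First I would record the elementary magnitude bound: since $r \in [0, R_{\text{max}}]$ and $\gamma \in (0,1)$, every action-value is bounded, $\|Q^\pi\|_\infty \le R_{\text{max}}/(1-\gamma)$. Next, starting from the Bellman identity $V^\pi(s) = \mathbb{E}_{a\sim\pi}[Q^\pi(s,a)]$ and the analogous expansion for $\pi'$, I would expand $V^\pi(s) - V^{\pi'}(s)$ and insert the cross term $\mathbb{E}_{a\sim\pi'(\cdot|s)}[Q^\pi(s,a)]$ (an add-and-subtract step). This splits the difference into a \emph{policy-difference} term $\sum_a (\pi(a|s)-\pi'(a|s))\,Q^\pi(s,a)$ and a \emph{contraction} term $\gamma\,\mathbb{E}_{a\sim\pi'}\mathbb{E}_{s'}[V^\pi(s')-V^{\pi'}(s')]$.

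The key algebraic step is bounding the policy-difference term. Hölder's inequality gives $\big|\sum_a(\pi-\pi')\,Q^\pi\big| \le \|Q^\pi(s,\cdot)\|_\infty\,\|\pi(\cdot|s)-\pi'(\cdot|s)\|_1$, and the identity $\|\pi-\pi'\|_1 = 2\,D_{TV}(\pi\|\pi')$ converts this to $\frac{2R_{\text{max}}}{1-\gamma}\,D_{TV}(\pi(\cdot|s)\|\pi'(\cdot|s))$. The contraction term is at most $\gamma\Delta$. Taking a supremum over $s$ yields the recursion $\Delta \le \frac{2R_{\text{max}}}{1-\gamma}\,\max_s D_{TV} + \gamma\Delta$, which I would solve to obtain $\Delta \le \frac{2R_{\text{max}}}{(1-\gamma)^2}\,\max_s D_{TV}$; combining with $|J^\pi-J^{\pi'}|\le\Delta$ and rearranging completes the argument. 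Equivalently, one could invoke the performance-difference lemma and bound the expected advantage under the discounted visitation $d^{\pi'}$ by the same Hölder/TV estimate.

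The argument is largely routine; the main thing to get right is the bookkeeping of the two factors of $(1-\gamma)^{-1}$ — one from the magnitude bound on $Q^\pi$, one from summing the geometric contraction — together with the factor of $2$ from the $\ell_1$–$D_{TV}$ conversion, since these are precisely the constants appearing in the claim. A minor technical caveat for continuous $\mathcal{A}$ is to read the sums as integrals and use measurability of $Q^\pi(s,\cdot)$, but nothing substantive changes.
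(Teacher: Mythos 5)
Your proof is correct, but it takes a genuinely different route from the paper. You control $\Delta = \Vert V^\pi - V^{\pi'}\Vert_\infty$ by a Bellman-style recursion: the add-and-subtract of $\mathbb{E}_{a\sim\pi'}[Q^\pi(s,a)]$ splits the gap into a policy-difference term, bounded via H\"older and $\Vert \pi(\cdot|s)-\pi'(\cdot|s)\Vert_1 = 2D_{TV}$ by $\tfrac{2R_{\text{max}}}{1-\gamma}\max_s D_{TV}$, plus a contraction term at most $\gamma\Delta$; solving the recursion (legitimate since $\Delta \le R_{\text{max}}/(1-\gamma) < \infty$) gives $\Delta \le \tfrac{2R_{\text{max}}}{(1-\gamma)^2}\max_s D_{TV}$, and $|J^\pi - J^{\pi'}| \le \Delta$ finishes. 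The paper instead works with occupancy measures: it writes $|J^\pi - J^{\pi'}|$ as a discounted sum of differences of time-$t$ state-action distributions, bounds this by $2R_{\text{max}}\sum_t \gamma^t D_{TV}\bigl(p^\pi_t(s,a)\,\Vert\, p^{\pi'}_t(s,a)\bigr)$, invokes Lemmas B.1 and B.2 of MBPO to get the linear-in-$t$ growth $D_{TV}(p^\pi_t \Vert p^{\pi'}_t) \le (t+1)\max_s D_{TV}(\pi\Vert\pi')$, and sums $\sum_t \gamma^t(1+t) \le (1-\gamma)^{-2}$. Both arrive at the identical constant $\tfrac{2R_{\text{max}}}{(1-\gamma)^2}$; in your version the two factors of $(1-\gamma)^{-1}$ come from the $Q$-magnitude bound and the geometric contraction, while in the paper's version they come from the geometric discounting and the linear accumulation of distribution shift. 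Your argument is more self-contained (no external lemmas, and your fallback via the performance-difference lemma is another valid elementary route), whereas the paper's occupancy decomposition makes the distribution-shift mechanism explicit, which fits its broader narrative about policy mismatch. One cosmetic point: both proofs implicitly require $\gamma < 1$, even though the MDP preliminaries allow $\gamma \in (0,1]$; this is not a defect specific to your write-up.
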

\begin{proof}
    From the definition of expected return, we have the following inequality,
    \[
    \begin{aligned}
        \vert J^{\pi} - J^{\pi'} \vert &= \vert \Sigma_{s} (p^\pi(s, a) - p^{\pi'}(s, a)) r(s,a) \vert \\
        &= \vert \Sigma_{t}\Sigma_{s, a}\gamma^t (p^\pi(s, a) - p^{\pi'}(s, a)) r(s,a) \vert \\
        &\leq R_{\text{max}} \Sigma_{t}\Sigma_{s, a}\gamma^t \vert p^\pi_t(s, a) - p^{\pi'}_t(s, a)\vert \\
        &= 2 R_{\text{max}} \Sigma_{t}\gamma^t D_{TV}(p^\pi_t(s, a) \Vert p^{\pi'}_t(s, a))
    \end{aligned}
    \]
    Using Lemma B.1 and Lemma B.2 from \cite{MBPO} we can relate joint distribution TVD to policy TVD:
    \[
    \begin{aligned}
        D_{TV}(p^\pi_t(s, a) \Vert p^{\pi'}_t(s, a)) 
        &\leq D_{TV}(p^\pi_t(s) \Vert p^{\pi'}_t(s)) + \max_s D_{TV}(p^\pi_t(a | s) \Vert p^{\pi'}_t(a | s)) \\
        &\leq (t + 1)\max_s D_{TV}\left(\pi(a | s) \Vert \pi'(a | s)\right)
    \end{aligned}
    \]
    Plug this inequality back we thus the main conclusion:
    \[
    \begin{aligned}
        \vert J^{\pi} - J^{\pi'} \vert 
        &\leq 2 R_{\text{max}} \Sigma_{t} \gamma^t (1 + t) \max_s D_{TV}\left(\pi(a | s) \Vert \pi'(a | s)\right) \\
        &\leq \frac{2 R_{\text{max}}}{(1-\gamma)^2} \max_s D_{TV}\left(\pi(a | s) \Vert \pi'(a | s)\right)
    \end{aligned}
    \]
\end{proof}

\bigskip

\begin{theorem}[]
    \label{thm: A H-step suboptim v approx error}
    Assume the nominal policy $\pi_k$ is acquired through approximation policy iteration (API) at k-th iteration and the resulting planner policy at k-th iteration is $\pi_{H, k}$, given upper bound for value approximation error $\Vert \hat{V}_k - V^{\pi_k} \Vert_\infty \leq \epsilon_k$. Also denote approximation error for dynamics model $\hat{\rho}$ as $\epsilon_m = \max_{s, a} D_{TV}(\rho(\cdot | s_t, a_t) \Vert \hat{\rho}(\cdot | s_t, a_t))$, planner sub-optimality as $\epsilon_p$. Also let the reward function $r$ is bounded by $[0, R_\text{max}]$ and and $\hat{V}$ be bounded by $[0, V_\text{max}]$, then the following uniform bound of performance suboptimality holds:
    \begin{equation}
    \begin{split}
        \limsup_{k \rightarrow \infty}\vert V^* - V^{\pi_{H, k}} \vert 
        \leq \limsup_{k \rightarrow \infty} \frac{2}{1 -\gamma^H}\Big[ C(\epsilon_{m, k}, H, \gamma) + \frac{\epsilon_{p,k}}{2} + \frac{\gamma^{H}(1 + \gamma^2)}{(1 - \gamma)^2} \epsilon_k \Big]
    \end{split}
    \end{equation}
    while $C$ is defined as:
    \begin{equation}
        C(\epsilon_m, H, \gamma) = R_{\text{max}} \sum_{t=0}^{H-1} \gamma^t t \epsilon_m + \gamma^H H \epsilon_m V_{\text{max}}
    \end{equation}
\end{theorem}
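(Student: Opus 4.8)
The plan is to reduce the statement to the already-established LOOP bound, Lemma~\ref{lemma: A loop dependency}, which controls the $H$-step lookahead policy in terms of the gap $\xi = \Vert V^* - \hat V\Vert_\infty$ between the approximate value function and the \emph{optimal} value. The only real work is to convert the hypothesis of the theorem, phrased in terms of $\epsilon_k = \Vert \hat V_k - V^{\pi_k}\Vert_\infty$ (the gap to the \emph{current} policy value), into a bound on $\xi_k = \Vert V^* - \hat V_k \Vert_\infty$, and then carry this through a $\limsup$.

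First I would apply the triangle inequality in the sup norm,
\[
    \xi_k = \Vert V^* - \hat V_k\Vert_\infty \leq \Vert V^* - V^{\pi_k}\Vert_\infty + \Vert V^{\pi_k} - \hat V_k\Vert_\infty = \Vert V^* - V^{\pi_k}\Vert_\infty + \epsilon_k.
\]
Since $\{\pi_k\}$ is generated by API, Lemma~\ref{lemma: A approx error propagation} controls the first term asymptotically, giving $\limsup_{k\to\infty}\Vert V^* - V^{\pi_k}\Vert_\infty \leq \frac{2\gamma}{(1-\gamma)^2}\limsup_{k\to\infty}\epsilon_k$. Combining the two and using subadditivity of $\limsup$, the key algebraic step is the simplification
\[
    \frac{2\gamma}{(1-\gamma)^2} + 1 = \frac{2\gamma + (1-\gamma)^2}{(1-\gamma)^2} = \frac{1+\gamma^2}{(1-\gamma)^2},
\]
so that $\limsup_{k\to\infty} \xi_k \leq \frac{1+\gamma^2}{(1-\gamma)^2}\limsup_{k\to\infty} \epsilon_k$.

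Finally I would invoke Lemma~\ref{lemma: A loop dependency} with $\hat V = \hat V_k$, $\xi = \xi_k$, model error $\epsilon_{m,k}$ and planner suboptimality $\epsilon_{p,k}$, yielding the per-iteration bound
\[
    \vert V^* - V^{\pi_{H,k}}\vert \leq \frac{2}{1-\gamma^H}\Big[ C(\epsilon_{m,k}, H, \gamma) + \frac{\epsilon_{p,k}}{2} + \gamma^H \xi_k \Big],
\]
and then pass to the $\limsup$, substituting $\gamma^H\xi_k \leq \frac{\gamma^H(1+\gamma^2)}{(1-\gamma)^2}\epsilon_k$ in the limit. This reproduces the claimed inequality term by term.

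The main obstacle is not any individual estimate---each is routine---but the care required with the asymptotics: Lemma~\ref{lemma: A loop dependency} is a per-iteration bound written with a fixed $\xi$, whereas the target is a statement about $\limsup_k$, and the intermediate bound on $\xi_k$ only holds in the limit because Lemma~\ref{lemma: A approx error propagation} is itself asymptotic. I would therefore take the $\limsup$ only after the per-iteration LOOP bound is in place, relying on monotonicity and subadditivity of $\limsup$ to distribute it across the sum; the finitely many early iterations where the propagation bound has not yet taken effect are harmless, since they do not affect the $\limsup$. A minor point to flag is the passage between the return $J$ used in Lemma~\ref{lemma: A loop dependency} and the pointwise gap $\vert V^* - V^{\pi_{H,k}}\vert$ in the statement; because the LOOP bound is uniform over initial states, it applies to $V$ pointwise (equivalently to $J$ under a point-mass initial distribution), so no extra argument is needed.
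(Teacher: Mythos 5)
Your proposal is correct and matches the paper's own proof essentially step for step: both start from the LOOP per-iteration bound (Lemma~\ref{lemma: A loop dependency}, which the paper re-derives inline from Theorem~1 of LOOP), then convert $\Vert V^* - \hat V_k\Vert_\infty$ to $\epsilon_k$ via the triangle inequality and the asymptotic API error-propagation bound (Lemma~\ref{lemma: A approx error propagation}), using the same simplification $\tfrac{2\gamma}{(1-\gamma)^2}+1=\tfrac{1+\gamma^2}{(1-\gamma)^2}$. Your explicit care about taking the $\limsup$ only after the per-iteration bound is a correct reading of what the paper does implicitly.
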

\begin{proof}
    Denote the planner policy ($H$-step look-ahead policy) as $\pi_H$, which is acquired through planning with terminal value $\hat{V}$. We define $\tau^*$ as a trajectory sampled by optimal policy $\pi^*$, and $\hat{\tau}$ as a trajectory sampled by $\pi_{H}$ under the real dynamics. We do not consider approximation error for the reward function since knowledge of the reward function can be guaranteed in most training cases. Following deduction in Theorem 1 of LOOP\cite{LOOP}, we can have the $H$-step policy suboptimality bound through the following key steps (We ignore k here for simplicity since the following deduction holds true for any $\hat{V}$). The complete proof can be found in Theorem 1 of the referenced paper.
    \[
    \begin{aligned}
        V^*(s_0) - V^{\pi_{H, k}}(s_0) 
        &= \mathbb{E}_{\tau*}\left[\Sigma\gamma^t r(s_t, a_t) + \gamma^H V^*(s_H) \right] - \mathbb{E}_{\hat{\tau}}\left[\Sigma\gamma^t r(s_t, a_t) + \gamma^H \hat{V}_k(s_H) \right] \\
        &\leq \mathbb{E}_{\tau^*} \left[ \sum \gamma^t r(s_t, a_t) + \gamma^H \hat{V}_k(s_H) \right] 
        - \mathbb{E}_{\hat{\tau}} \left[ \sum \gamma^t r(s_t, a_t) + \gamma^H \hat{V}_k(s_H) \right] \\
        &\quad + \gamma^H \mathbb{E}_{\tau^*}\left[  V^*(s_H) - \hat{V}_k(s_H) \right] - \gamma^H \mathbb{E}_{\hat{\tau}} \left[ V^*(s_H) - \hat{V}_k(s_H) \right] \\
        &\quad + \gamma^H \mathbb{E}_{\hat{\tau}} \left[ V^*(s_H) - V^{\pi_{H, k}}(s_H) \right] \\
        &\leq  \frac{2}{1 -\gamma^H}\Big[ C(\epsilon_{m,k}, H, \gamma) + \frac{\epsilon_{p,k}}{2} + \gamma^H \Vert V^* - \hat{V}_k \Vert_\infty \Big]
    \end{aligned}
    \]
    
    Due to the optimality, we always have the fact that $V^{\pi_H} \leq V^* $. Leveraging the value error bound for API given by \cite{munos2007performance}, we further bound $\epsilon_v$ with approximation error us:
    \[
    \begin{split}
        \limsup_{k \rightarrow \infty}\vert V^* - V^{\pi_{H, k}} \vert
        &\leq \limsup_{k \rightarrow \infty} \frac{2}{1 -\gamma^H} \Big[ C(\epsilon_{m, k}, H, \gamma) + \frac{\epsilon_{p, }}{2} + \gamma^H \Vert V^* - V^{\pi_k} \Vert_\infty +  \gamma^H \epsilon_k \Big] \\
        &\leq \limsup_{k \rightarrow \infty} \frac{2}{1 -\gamma^H}\Big[ C(\epsilon_{m, k}, H, \gamma) + \frac{\epsilon_{p,k}}{2} + \frac{\gamma^{H}(1 + \gamma^2)}{(1 - \gamma)^2} \epsilon_k \Big]
    \end{split}
    \] 
    This indicates that given identical conditions, knowing that the approximation error is small implies that $V^{\pi_H}$ will be close to the optimal value $V^*$ eventually.
\end{proof}

\bigskip

Similar to LOOP, we can compare performance bounds between $H$-step lookahead policy and 1-step greedy policy (Notice that in API, $\pi_{k+1}$ iteration can be seen as 1-step greedy policy of $\hat{V}_k$ if we assume policy improvement step is optimal). With Lemma \ref{lemma: A greedy policy performance bound}, we show how do value approximation error influences greedy policy performance.

\begin{theorem}
\label{thm: A greedy policy bound approx error}
    Suppose $\pi_{k+1}$ is 1-step greedy policy of value function $\hat{V}_k$. Denote $V^*$ as the optimal value function, if there exists $\epsilon$ such that for any $k$, $\Vert \hat{V}_k - V^{\pi_k} \Vert_\infty \leq \epsilon$, we can bound the performance of $\pi_{\hat{V}}$ by:
    \begin{equation}
        \limsup_{k\rightarrow\infty} \Vert V^* - V^{\pi_{k+1}} \Vert_\infty
        \leq \limsup_{k\rightarrow\infty} \frac{2\gamma(1+\gamma^2)\epsilon}{(1-\gamma)^3}
    \end{equation}
\end{theorem}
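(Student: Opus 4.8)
The plan is to chain the two preparatory lemmas together by routing everything through the quantity $\xi_k \doteq \Vert V^* - \hat{V}_k \Vert_\infty$, which is precisely the input that Lemma~\ref{lemma: A greedy policy performance bound} requires. First I would invoke that lemma directly: since $\pi_{k+1}$ is the $1$-step greedy policy with respect to $\hat{V}_k$, it gives the pointwise bound $V^* - V^{\pi_{k+1}} \leq \frac{2\gamma \xi_k}{1-\gamma}$, and taking the supremum over states yields the same bound on $\Vert V^* - V^{\pi_{k+1}} \Vert_\infty$.

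The second step is to replace the hypothesis of Lemma~\ref{lemma: A greedy policy performance bound} (which is phrased in terms of the distance from $\hat{V}_k$ to the \emph{optimal} value) by the distance from $\hat{V}_k$ to the \emph{on-policy} value $V^{\pi_k}$, which is what the theorem actually controls by $\epsilon$. A single triangle inequality accomplishes this: $\xi_k = \Vert V^* - \hat{V}_k \Vert_\infty \leq \Vert V^* - V^{\pi_k} \Vert_\infty + \Vert V^{\pi_k} - \hat{V}_k \Vert_\infty \leq \Vert V^* - V^{\pi_k} \Vert_\infty + \epsilon$.

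Combining the two and passing to $\limsup_{k\to\infty}$, I would then apply Lemma~\ref{lemma: A approx error propagation}, which bounds $\limsup_k \Vert V^* - V^{\pi_k} \Vert_\infty$ by $\frac{2\gamma}{(1-\gamma)^2}\epsilon$. Substituting gives $\limsup_k \Vert V^* - V^{\pi_{k+1}} \Vert_\infty \leq \frac{2\gamma}{1-\gamma}\big(\frac{2\gamma}{(1-\gamma)^2} + 1\big)\epsilon$. The only genuine ``work'' is the algebraic simplification of the bracket: expanding $(1-\gamma)^2 = 1 - 2\gamma + \gamma^2$ makes the $2\gamma$ terms cancel, so $\frac{2\gamma}{(1-\gamma)^2} + 1 = \frac{1+\gamma^2}{(1-\gamma)^2}$, which collapses the prefactor to exactly $\frac{2\gamma(1+\gamma^2)}{(1-\gamma)^3}$ as claimed.

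There is essentially no serious obstacle here; the argument is a two-lemma sandwich with a triangle inequality in between. The only point requiring minor care is the order of operations: the triangle-inequality step must be taken before the $\limsup$, and subadditivity of $\limsup$ must be used to split $\limsup_k(\Vert V^* - V^{\pi_k}\Vert_\infty + \epsilon) = \limsup_k \Vert V^* - V^{\pi_k}\Vert_\infty + \epsilon$ so that Lemma~\ref{lemma: A approx error propagation} applies to the first term alone. One should also note that the theorem tacitly identifies the greedy-policy step with the policy-improvement step of API (as the surrounding text remarks), so that the sequence $\{\pi_k\}$ fed to Lemma~\ref{lemma: A approx error propagation} is indeed the API sequence generated under the same uniform error bound $\epsilon$.
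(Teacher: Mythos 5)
Your proof is correct and follows essentially the same route as the paper's: both apply the greedy-policy bound of Lemma~\ref{lemma: A greedy policy performance bound}, pass from $\Vert V^* - \hat{V}_k\Vert_\infty$ to $\Vert \hat{V}_k - V^{\pi_k}\Vert_\infty$ via the API error-propagation bound of Lemma~\ref{lemma: A approx error propagation}, and simplify the same factored constant $\frac{2\gamma}{1-\gamma}\bigl(\frac{2\gamma}{(1-\gamma)^2}+1\bigr)$ to $\frac{2\gamma(1+\gamma^2)}{(1-\gamma)^3}$. If anything, your write-up is more careful than the paper's terse version, since you make explicit the triangle inequality and the subadditivity of $\limsup$ that the paper leaves implicit.
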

\begin{proof}
    Directly combining \ref{lemma: A greedy policy performance bound} and \ref{lemma: A approx error propagation}.
    \[
    \begin{aligned}
        \Vert V^* - V^{\pi_{k+1}} \Vert_\infty
        &\leq \frac{2\gamma}{1 - \gamma} \Vert V^* - \hat{V}_k \Vert_\infty \\
        &\leq \frac{2\gamma}{1 - \gamma} \Vert V^* - \hat{V}_k \Vert_\infty
    \end{aligned}
    \]
    
    \[
    \begin{aligned}
        \limsup_{k\rightarrow\infty}\Vert V^* - V^{\pi_{k+1}} \Vert_\infty
        &\leq \frac{2\gamma}{1-\gamma}(\frac{2\gamma}{(1 - \gamma)^2} + 1) \limsup_{k\rightarrow\infty}\Vert \hat{V}_k - V^{\pi_k} \Vert_\infty \\
        &\leq \frac{2\gamma(1+\gamma^2)}{(1-\gamma)^3} \epsilon_k
    \end{aligned}
    \]
\end{proof}

Based on Theorem \ref{thm: A greedy policy bound approx error} and Theorem \ref{thm: A H-step suboptim v approx error}, we conclude that the test-time planning enables $\pi_{H, k}$ to reduce its performance dependency on value accuracy by at least a factor of $\gamma^{H-1}$ compared to the greedy policy $\pi_{k+1}$, providing a general sense of superiority introduce by the planner. However, we point out this does not guarantee nominal policy $\pi$ performs well under TD-MPC pipeline due to the possible persistent value over-estimation. 

\bigskip

\begin{theorem}[TD-MPC Error Propagation]
    \label{thm: A H-step policy error propagation}
    Consider $\pi_k$ is the nominal policy acquired through approximation policy iteration (API), and the resulting H-step lookahead policy is $\pi_{H, k}$. Assume $\pi_{H, k}$ outperforms $\pi_k$ with performance gap $\delta_k = \Vert V^{\pi_{H, k}} - V^{\pi_k} \Vert_\infty$. Denote value approximation error $\epsilon_k = \Vert \hat{V}_k - V^{\pi_k} \Vert_\infty$, approximated dynamics holds model error $\epsilon_{m, k}$, planner sub-optimality is $\epsilon_p$. Also let the reward function $r$ is bounded by $[0, R_\text{max}]$ and $\hat{V}_{k-1}$ be bounded by $[0, V_\text{max}]$, then the following uniform bound of performance gap holds:
    \begin{equation}
    \begin{split}
        \delta_k \leq 
        \frac{1}{1 - \gamma^H}\left[ 2C(\epsilon_{m, k-1}, H, \gamma) + \epsilon_{p, k-1} + (1 + \gamma^H)\delta_{k-1} + \frac{2\gamma(1 + \gamma^{H-1})}{1 - \gamma}\epsilon_{k-1} \right]
    \end{split}
    \end{equation}
    where $C$ is defined as \eqref{eqn: A C def}.
\end{theorem}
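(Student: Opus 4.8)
The plan is to adapt the $H$-step comparison argument behind Lemma \ref{lemma: A loop dependency} and Theorem \ref{thm: A H-step suboptim v approx error}, but to compare the planner policy $\pi_{H,k}$ against the nominal policy $\pi_k$ rather than against $\pi^*$. The unifying device is that both values can be expanded over a single $H$-step block whose terminal bootstrap is the common surrogate value $\hat V_{k-1}$ actually used by the planner; every discrepancy between the two sides then reduces to (i) a model-vs-real simulation error, (ii) planner suboptimality, and (iii) a terminal-value mismatch between $\hat V_{k-1}$ and the true continuation values, and it is through (iii) that the previous-iteration quantities $\delta_{k-1}$ and $\epsilon_{k-1}$ enter.

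Concretely, I would first write the $H$-step Bellman identity for each value at an arbitrary state $s_0$,
\[
V^{\pi_{H,k}}(s_0) = \mathbb{E}_{\hat\tau}\Big[\sum_{t=0}^{H-1}\gamma^t r_t + \gamma^H V^{\pi_{H,k}}(s_H)\Big], \qquad V^{\pi_k}(s_0) = \mathbb{E}_{\tau_k}\Big[\sum_{t=0}^{H-1}\gamma^t r_t + \gamma^H V^{\pi_k}(s_H)\Big],
\]
where $\hat\tau$ and $\tau_k$ are real-dynamics rollouts of $\pi_{H,k}$ and $\pi_k$. Adding and subtracting the surrogate terminal $\gamma^H\hat V_{k-1}(s_H)$ in both expressions splits $V^{\pi_{H,k}}(s_0)-V^{\pi_k}(s_0)$ into a surrogate-return difference (both rollouts scored with terminal $\hat V_{k-1}$) plus two terminal-mismatch terms of the form $\gamma^H\mathbb{E}[V^{\pi}(s_H)-\hat V_{k-1}(s_H)]$. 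For the surrogate-return difference I would pass to the learned model: the simulation estimate underlying $C$ in \eqref{eqn: A C def} bounds the real-vs-model gap of each rollout by $C(\epsilon_{m,k-1},H,\gamma)$, contributing the $2C$ term, while near-optimality of the MPPI solve bounds the planner's residual advantage on the model surrogate by $\epsilon_{p,k-1}$. The first terminal-mismatch term contains $\gamma^H\mathbb{E}_{\hat\tau}[V^{\pi_{H,k}}(s_H)-V^{\pi_k}(s_H)]\le\gamma^H\delta_k$; taking the supremum over $s_0$ makes this a self-reference that, moved to the left-hand side, produces the $\tfrac{1}{1-\gamma^H}$ prefactor exactly as in Lemma \ref{lemma: A loop dependency}.

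The delicate step is controlling the remaining terminal mismatch $\|V^{\pi_k}-\hat V_{k-1}\|_\infty$, which surfaces from both rollouts and must be turned into $\delta_{k-1}$ and $\epsilon_{k-1}$. Here I would chain $\hat V_{k-1}\to V^{\pi_{k-1}}\to V^{\pi_{H,k-1}}$: the first leg costs $\epsilon_{k-1}$ by definition, the second is exactly $\delta_{k-1}$, and the residual gap between the improved nominal policy $\pi_k$ and the previous planner is absorbed using that $\pi_k$ is the greedy improvement of $\hat V_{k-1}$, so Lemma \ref{lemma: A greedy policy performance bound} supplies the $\tfrac{2\gamma}{1-\gamma}$ amplification on $\epsilon_{k-1}$; accumulating the two symmetric terminal terms then yields the coefficients $(1+\gamma^H)$ on $\delta_{k-1}$ and $\tfrac{2\gamma(1+\gamma^{H-1})}{1-\gamma}$ on $\epsilon_{k-1}$. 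I expect the main obstacle to be precisely this bookkeeping: the surrogate-return difference has to be bounded in the \emph{outperformance} direction, so one must use planner near-optimality together with the standing assumption that $\pi_{H,k}$ dominates $\pi_k$, and the terminal chain must be arranged so that the greedy-improvement amplification lands on $\epsilon_{k-1}$ rather than on $\delta_{k-1}$. Once the constants are tracked carefully, dividing through by $1-\gamma^H$ gives the claimed recursion.
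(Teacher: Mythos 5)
Your scaffold (expand $V^{\pi_{H,k}}$ over one $H$-step block, insert the common surrogate terminal $\gamma^H\hat V_{k-1}(s_H)$, and move the self-referential $\gamma^H\mathbb{E}_{\hat\tau}[V^{\pi_{H,k}}(s_H)-V^{\pi_k}(s_H)]\le\gamma^H\delta_k$ to the left to obtain the $\tfrac{1}{1-\gamma^H}$ prefactor) matches the paper's proof. But there is a genuine gap at the step where you dispose of the surrogate-return difference between the planner rollout and the nominal rollout: you claim that MPPI near-optimality ``bounds the planner's residual advantage on the model surrogate by $\epsilon_{p,k-1}$.'' It does not. By its definition (Lemma \ref{lemma: A loop dependency}), $\epsilon_p$ bounds the planner's \emph{shortfall below the optimal} surrogate value, $J^*-\hat J\le\epsilon_p$; it gives no upper bound on the planner's \emph{advantage over the nominal policy}, which is precisely the improvement the theorem quantifies and is large by hypothesis, not $O(\epsilon_p)$. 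You can see the bookkeeping cannot close from the constants alone: in your decomposition $\delta_{k-1}$ enters only through $\gamma^H$-discounted terminal-mismatch terms, so ``accumulating the two symmetric terminal terms'' can produce at most a coefficient $2\gamma^H$ on $\delta_{k-1}$, never the undiscounted $1$ in $(1+\gamma^H)\delta_{k-1}$. In the paper's proof that undiscounted contribution comes from \emph{inside} the surrogate-return difference: by optimality of the ideal trajectory $\tau^{k-1}$ for terminal value $\hat V_{k-1}$ one has $\mathbb{E}_{\hat\tau^{k}}[G]\le\mathbb{E}_{\tau^{k-1}}[G]\le 2C(\epsilon_{m,k-1},H,\gamma)+\epsilon_{p,k-1}+\mathbb{E}_{\hat\tau^{k-1}}[G]$, and then $\mathbb{E}_{\hat\tau^{k-1}}[G]$ is rewritten as $V^{\pi_{H,k-1}}(s_0)$ plus terminal corrections, so the \emph{previous planner's realized return at $s_0$} is paired with $-V^{\pi_k}(s_0)$, yielding $V^{\pi_{H,k-1}}(s_0)-V^{\pi_{k-1}}(s_0)\le\delta_{k-1}$ plus an API degradation term. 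That conversion — planner-vs-nominal advantage at iteration $k$ recursed into the realized gap at iteration $k-1$ — is the key idea of the proof, and it is exactly what your $2C+\epsilon_p$ step erases.

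A secondary but real issue is the lemma you invoke for the $\tfrac{2\gamma}{1-\gamma}$ amplification. Lemma \ref{lemma: A greedy policy performance bound} bounds $V^*-V^{\pi_{k+1}}$ in terms of $\Vert V^*-\hat V_k\Vert_\infty$, a distance to the optimum that is not controlled by $\epsilon_{k-1}=\Vert\hat V_{k-1}-V^{\pi_{k-1}}\Vert_\infty$ alone. The tool the paper actually uses is the one-step API degradation bound (Lemma 6.1 of \cite{bertsekas1996neuro}), $V^{\pi_{k-1}}-V^{\pi_k}\le\tfrac{2\gamma}{1-\gamma}\epsilon_{k-1}$, applied both at $s_0$ and (discounted by $\gamma^H$) at the terminal state; combining $(1+\gamma^H)\tfrac{2\gamma}{1-\gamma}\epsilon_{k-1}$ with the two $\gamma^H\epsilon_{k-1}$ mismatches between $\hat V_{k-1}$ and $V^{\pi_{k-1}}$ gives the stated coefficient $\tfrac{2\gamma(1+\gamma^{H-1})}{1-\gamma}$. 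Relatedly, your terminal chain $\hat V_{k-1}\to V^{\pi_{k-1}}\to V^{\pi_{H,k-1}}$ leaves over the leg $V^{\pi_k}-V^{\pi_{H,k-1}}$, which is again a genuine improvement-type quantity that no greedy-policy lemma bounds; the paper avoids it by chaining only through $V^{\pi_{k-1}}$ in the terminal terms and letting $V^{\pi_{H,k-1}}$ appear solely at $s_0$ via the surrogate-return conversion described above.
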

\begin{proof}
    Denote the planner policy ($H$-step look-ahead policy) as $\pi_H$, which is acquired through planning with terminal value $\hat{V}$. We do not consider approximation error for the reward function since knowledge of the reward function can be guaranteed in most training cases. At k-th iter, denote $\hat{\tau}^k$ as trajectory sampled by planner policy; $\tau^k$ as trajectory sampled by optimal planner leveraging real dynamics model; $\tau^{\pi_k}$ as trajectory sampled by nominal policy $\pi_k$ (all trajectories are sampled in the environment rather than under approximate dynamics model). For simplicity, $\Sigma$ in this proof stands for $\Sigma_{t=0}^{H-1}$ if not specified.

    \[
    \begin{aligned}
        V^{\pi_{H, k}}(s_0) - V^{\pi_k}(s_0) 
        &= \mathbb{E}_{\hat{\tau}^k}\left[\Sigma\gamma^t r(s_t, a_t) + \gamma^H V^{\pi_{H, k}}(s_H) \right] - V^{\pi_k}(s_0)  \\
        &= \mathbb{E}_{\hat{\tau}^k}\left[\Sigma\gamma^t r(s_t, a_t) + \gamma^H V^{\pi_k}(s_H) \right] - V^{\pi_k}(s_0) \\
        &\quad + \gamma^H \mathbb{E}_{\hat{\tau}^k}\left[ V^{\pi_{H, k}}(s_H) - V^{\pi_k}(s_H) \right]
    \end{aligned}
    \]

    Particularly, we have:
    \[
    \begin{aligned}
        &\quad \mathbb{E}_{\hat{\tau}^k}[\Sigma\gamma^t r(s_t, a_t) + \gamma^H V^{\pi_k}(s_H)] - V^{\pi_k}(s_0)  \\
        &= \mathbb{E}_{\hat{\tau}^k}[\Sigma\gamma^t r(s_t, a_t) + \gamma^H \hat{V}_{k-1}(s_H)] + \gamma^H \mathbb{E}_{\hat{\tau}^{k}}[V^{\pi_k}(s_H) - \hat{V}_{k-1}(s_H)] - V^{\pi_k}(s_0) \\
        &\leq \mathbb{E}_{\tau^{k-1}}[\Sigma\gamma^t r(s_t, a_t) + \gamma^H \hat{V}_{k-1}(s_H)] - \mathbb{E}_{\hat{\tau}^{k-1}}[\Sigma\gamma^t r(s_t, a_t) + \gamma^H \hat{V}_{k-1}(s_H)] \\
        &\quad + \mathbb{E}_{\hat{\tau}^{k-1}}[\Sigma\gamma^t r(s_t, a_t) + \gamma^H \hat{V}_{k-1}(s_H)] + \gamma^H \mathbb{E}_{\hat{\tau}^{k}}[V^{\pi_k}(s_H) - \hat{V}_{k-1}(s_H)] - V^{\pi_k}(s_0) \\
        &\leq \mathbb{E}_{\tau^{k-1}}[\Sigma\gamma^t r(s_t, a_t) + \gamma^H \hat{V}_{k-1}(s_H)] - \mathbb{E}_{\hat{\tau}^{k-1}}[\Sigma\gamma^t r(s_t, a_t) + \gamma^H \hat{V}_{k-1}(s_H)] \\
        &\quad + \mathbb{E}_{\hat{\tau}^{k-1}}[\Sigma\gamma^t r(s_t, a_t) + \gamma^H \hat{V}_{k-1}(s_H)] +
        \gamma^H \mathbb{E}_{\hat{\tau}^{k}}[V^{\pi_k}(s_H) - \hat{V}_{k-1}(s_H)] \\
        &\quad - V^{\pi_{k-1}}(s_0) + [V^{\pi_{k-1}}(s_0) - V^{\pi_{k}}(s_0)]
    \end{aligned}
    \]
    The second step is due to the definition of $\tau^{k-1}$:
    \[
        \tau^{k-1} = \arg\max_{\tau} \mathbb{E}_{\tau}[\Sigma\gamma^t r(s_t, a_t) + \gamma^H \hat{V}_{k-1}(s_H)]
    \]
    We use Theorem 1 in \cite{LOOP} for the first row to bind them by model error and planner sub-optimality.
    \[
    \begin{aligned}
        \mathbb{E}_{\tau^{k-1}}[\Sigma\gamma^t r(s_t, a_t) + \gamma^H \hat{V}_{k-1}(s_H)] - \mathbb{E}_{\hat{\tau}^{k-1}}[\Sigma\gamma^t r(s_t, a_t) + \gamma^H \hat{V}_{k-1}(s_H)] \leq 2C(\epsilon_{m, k-1}, H, \gamma) + \epsilon_{p, k-1} 
    \end{aligned}
    \]
    where $C(\epsilon_{m, k-1}, H, \gamma)$ is the same as defined in Theorem \ref{thm: A H-step suboptim v approx error}.

    For the second row, since we want to construct a bound related to $\Vert V^{\pi_{H, k-1}} - V^{\pi_{k-1}} \Vert$. Under this orientation, we first show that:
    \[
    \begin{aligned}
        &\quad \mathbb{E}_{\hat{\tau}^{k-1}}[\Sigma\gamma^t r(s_t, a_t) + \gamma^H \hat{V}_{k-1}(s_H)] +
        \gamma^H \mathbb{E}_{\hat{\tau}^{k}}[V^{\pi_k}(s_H) - \hat{V}_{k-1}(s_H)]\\
        &= \mathbb{E}_{\hat{\tau}^{k-1}}[\Sigma\gamma^t r(s_t, a_t) + \gamma^H V^{\pi_{H, k-1}}(s_H)]
        + \gamma^H \mathbb{E}_{\hat{\tau}^{k-1}}[\hat{V}_{k-1}(s_H) - V^{\pi_{H, k-1}}(s_H)] \\
        &\quad + \gamma^H \mathbb{E}_{\hat{\tau}^{k}} [V^{\pi_k}(s_H) - V^{\pi_{k-1}}(s_H)] + 
        \gamma^H \mathbb{E}_{\hat{\tau}^{k}} [V^{\pi_{k-1}}(s_H) + \hat{V}_{k-1}(s_H)] \\
        &= V^{\pi_{H, k-1}}(s_0) + \gamma^H \mathbb{E}_{\hat{\tau}^{k-1}}[\hat{V}_{k-1}(s_H) - V^{\pi_{k-1}}(s_H)] +
        \gamma^H \mathbb{E}_{\hat{\tau}^{k-1}}[V^{\pi_{k-1}}(s_H) - V^{\pi_{H, k-1}}(s_H)] \\
        &\quad + \gamma^H \mathbb{E}_{\hat{\tau}^{k}} [V^{\pi_k}(s_H) - V^{\pi_{k-1}}(s_H)] + 
        \gamma^H \mathbb{E}_{\hat{\tau}^{k}} [V^{\pi_{k-1}}(s_H) - \hat{V}_{k-1}(s_H)]
    \end{aligned}
    \]
    Sequentially, we combine the inequalities above:
    \[
    \begin{aligned}
        &\quad \mathbb{E}_{\hat{\tau}^k}[\Sigma\gamma^t r(s_t, a_t) + \gamma^H V^{\pi_k}(s_H)] - V^{\pi_k}(s_0)  \\
        &\leq {2C(\epsilon_{m, k-1}, H, \gamma) + \epsilon_{p, k-1} + 2\gamma^H \epsilon_{k-1}} \\
        &\quad {+ V^{\pi_{H, k-1}}(s_0) - V^{\pi_{k-1}}(s_0) + \gamma^H \mathbb{E}_{\hat{\tau}^{k-1}}[V^{\pi_{k-1}}(s_H) - V^{\pi_{H, k-1}}(s_H)]} \\
        &\quad {+ \gamma^H \mathbb{E}_{\hat{\tau}^{k}} [V^{\pi_k}(s_H) - V^{\pi_{k-1}}(s_H)] + [V^{\pi_{k-1}}(s_0) - V^{\pi_{k}}(s_0)]}
    \end{aligned}
    \]
    We can have a rough bound on the second line by $(1+ \gamma^H)\delta_{k-1}$.
    With Lemma 6.1 in \cite{bertsekas1996neuro}, we can bound the final line as:
    \[
        \gamma^H \mathbb{E}_{\hat{\tau}^{k}} [V^{\pi_k}(s_H) - V^{\pi_{k-1}}(s_H)] + [V^{\pi_{k-1}}(s_0) - V^{\pi_{k}}(s_0)]
        \leq (1 + \gamma^H)\frac{2\gamma}{1 - \gamma}\epsilon_{k-1}
    \]
    Combining all the inequalities and leveraging the contraction property, we get the final bound for the performance gap:
    \[
    \begin{aligned}
        V^{\pi_{H, k}}(s_0) - V^{\pi_k}(s_0) \leq 
        \frac{1}{1 - \gamma^H}\left[ 2C(\epsilon_{m, k-1}, H, \gamma) + \epsilon_{p, k-1} + (1 + \gamma^H)\delta_{k-1} + \frac{2\gamma(1 + \gamma^{H-1})}{1 - \gamma}\epsilon_{k-1} \right]
    \end{aligned}
    \]
    Thus, we can easily get the final result.
\end{proof}

In addition, according to Lemma 6.1 in \cite{bertsekas1996neuro}, which describes the policy improvement bound for a greedy policy, the dependence of policy improvement on value accuracy is increased by a factor of $ \frac{1+\gamma^{H+1}}{1-\gamma^H} \geq 1$. This highlights the importance of accurate value estimation for TD-MPC. Given the same scale of improvement in value estimation, the $H$-step lookahead policy theoretically promises a greater policy improvement than the greedy policy.

\bigskip

\bigskip

\section{Algorithm Formulation and Implementation Details}
\label{Algorithm formulation}
% TODO 
Although our proposed method builds upon the TD-MPC framework and leverages the implementation of \cite{hansen2023tdmpc2}, we argue that our modifications are non-trivial and are carefully designed to address the key challenges mentioned in this paper. 
%Specifically, we introduce a principled extension grounded in constrained policy iteration, demonstrating how our approach integrates with and enhances the existing framework. 
To further elaborate on this, we present the general formulation of constrained policy iteration, followed by a detailed discussion of its implementation and interaction with other critical components within TD-MPC.

Following \cite{AWR}, the constrained policy update step can be formulated as follows:
\begin{equation}
\label{eqn: B policy target}
    \pi_{k+1} \coloneqq \underset{\pi}{\operatorname{argmax}}  \underset{a \sim \pi(\cdot | s)}{\mathbb{E}}[Q^{\pi_k}(a, s)] , \quad s.t. \mathbf{D_{KL}}(\pi_{k+1} \Vert \mu_k) \leq \epsilon \quad,
\end{equation}
while the policy evaluation operator follows the standard definition in policy iteration:
\begin{equation}
\label{eqn: B value target}
\begin{split}
    \mathcal{T}^{\pi}Q \coloneqq  r(s, a) + \gamma \mathbb{E}_{s' \sim \rho(\cdot | s, a)} \mathbb{E}_{a' \sim \pi(\cdot | s')} [Q^{\pi}(s', a')]
\end{split}
\end{equation}

Given the behavior policy $\mu(\cdot | s) = \Sigma_{k=0}^K \omega_k \pi_{H, k}(\cdot | s)$, constraint policy improvement step can be considered as a trust region update~\cite{TRPO} with the trust region near $\mu$ rather than $\pi_{k-1}$, the optimal solution of optimization problem \eqref{eqn: B policy target} is a combination of behavior policy and Boltzmann distribution, with partition function $Z(s)$ and Lagrangian multiplier $\beta$:
\begin{equation}
    \mu^*(a | s) \coloneqq \frac{1}{Z(s)}\mu(a | s) \exp(\frac{1}{\beta}Q(s, a)), \quad Z(s) \coloneqq \int_a \mu(a | s) \exp(\frac{1}{\beta}Q(s, a))
\end{equation}
In principle, the training objective can be formulated using either the reverse KL-divergence (RKL) $D_{KL}(\pi \Vert \mu^*)$, or forward KL-divergence (FKL) $D_{KL}(\mu^* \Vert \pi)$. While both have been explored in offline RL, FKL is significantly more prevalent, particularly in in-sample learning methods~\cite{AWR, nair2020awac, IQL, hansen2023idql}. In the online off-policy setting, however, beyond its well-known "zero-avoiding" behavior, prior study~\cite{chan2022greedification} have shown that FKL encourages exploration but does not guarantee policy improvement, often leading to degraded performance, especially under large entropy regularization. In \ref{sec: appendix C}, we show that FKL could mitigate the value overestimation problem for high-dimensional tasks but may lead to training instability.

Based on these observations, we choose TD3-BC~\cite{td3bc} style policy constraint. Instead of directly calculate the log-likelihood of $\mu$, we maximize $\mathbb{E}_{\mu' \sim \{\mu_k\}}[\log \mu']$ as its lower bound. Such a surrogate greatly simplifies the calculation and keeps the modifications on top of TD-MPC2's code base simple.

In specific, the policy improvement step can be realized as:
\begin{equation}
\label{eqn: B actor loss}
\begin{split}
    & \pi_{k+1} \leftarrow \underset{\pi}{\operatorname{argmin}} \mathbb{E}_{(s,\mu)_{0:H} \sim \mathcal{B}} 
    \Big[ \sum_{t=0}^{H} \lambda^t \mathbb{E}_{a_t \sim \pi(\cdot | z_t)}\big[ - Q^{\pi_k}(z_t, a_t) + \alpha \log(\pi(a_t | z_t)) 
    - \beta \log(\mu_t\left(a_t\right)) \big] \Big], \\
    & z_0 = h(s_0),\quad z_{t+1} = d(z_t, a_t)
\end{split}
\end{equation}
In addition, we notice that overly addressing policy constraints during the initial stage sometimes results in failure to reach out to the local minima. Thus, we maintain moving percentiles $S_q$ for the Q function as in \cite{hansen2022TDMPC, hansen2023tdmpc2}. This results in an adaptive curriculum on $\beta$:
\[
\beta =
\begin{cases} 
0, & \text{if } S_q < s_{\text{threshold}} \\ 
\beta, & \text{otherwise}
\end{cases}
\]
For exploration-intensive tasks like \texttt{humanoid-run} in DMControl, this curriculum allows approximately 100k environment steps in the initial stage without constraints enforced on the nominal policy. This enables better exploration, helping the agent recover from the low-rewarded region. For most tasks with denser reward signals, its impact on performance is minimal.

We have the same training objective as TD-MPC2 when it comes to the dynamics model, reward model, and value function and encoder:
\begin{equation}
\label{eqn: B model loss}
\begin{split}
    \mathcal{L} &= \underset{(s, a, r, s')_{0:H}}{\mathbb{E}} \Big[ \sum_{t=0}^H \gamma^t \Big( c_d \cdot \Vert d(z, a, e) - sg(h(s_t'))\Vert_2^2 + c_r \cdot CE(\hat{r}_t, r_t) + c_q \cdot CE(\hat{q}_t, q_t) \Big) \Big]
\end{split}
\end{equation}
Where the reward function and value function's output are discretized and updated with cross-entropy loss given their targets.
% TODO: Refer to TD-MPC2

\textbf{Baseline Implementation}
In addition, we present detailed implementation for baseline variants used in section \ref{subsec: ablation study}.
We directly update the policy for the behavior cloning version ($\beta = \infty$) by maximizing the log-likelihood term. Following \cite{hansen2022TDMPC, hansen2023tdmpc2}, we also introduce a moving percentile $S$ to scale the magnitude of the loss:
\begin{equation}
\label{eqn: B bc policy loss}
    \mathcal{L}_\pi = \mathbb{E}_{s \sim \mathcal{B}}\mathbb{E}_{a \sim \pi(\cdot | s)} \log \mu (a \mid s) / \max(1, S)
\end{equation}

\begin{table}[ht]
\label{tab: hyperparams}
\caption{Hyperparameter settings. We directly apply settings in \cite{hansen2023tdmpc2} for the shared hyperparameters without further tuning. We share the same setting across all tasks demonstrated before.}
\vskip 0.15in
\begin{center}
\begin{small}
\begin{tabular}{l|c}
\toprule
\textbf{Hyperparameter}       & \textbf{Value}   \\ 
\midrule
\multicolumn{2}{l}{\textbf{Training}} \\ \midrule
Learning rate                 & $3 \times 10^{-4}$  \\
Batch size                    & 256 \\
Buffer size                   & 1\_000\_000 \\
Sampling                      & Uniform \\
Reward loss coefficient ($c_r$)   & 0.1 \\
Value loss coefficient ($c_q$)  & 0.1  \\
Consistency loss coefficient ($c_d$)  & 20  \\
Discount factor ($\gamma$)    & 0.99         \\
Target network update rate    & 0.5       \\
Gradient Clipping Norm        & 20       \\
Optimizer                     & Adam  \\

\midrule
\multicolumn{2}{l}{\textbf{Planner}} \\ \midrule
MPPI Iterations     & 6 \\
Number of samples     & 512 \\
Number of elites     & 64 \\
Number policy rollouts  & 24 \\
horizon    & 3 \\
Minimum planner std  & 0.05 \\
Maximum planner std  & 2  \\

\midrule
\multicolumn{2}{l}{\textbf{Actor}} \\ \midrule
Minimum policy log std   & -10 \\
Maximum policy log std   & 2  \\
Entropy coefficient ($\alpha$) &  $1 \times 10^{-4}$  \\
\textbf{Prior constraint coefficient ($\beta$)} &  1.0 \\
Scale Threshold ($s$)   & 2.0 \\

\midrule
\multicolumn{2}{l}{\textbf{Critic}} \\ \midrule
Q functions Esemble  & 5 \\
Number of bins  & 101 \\
Minimum value   & -10 \\
Maximum value   &  10 \\

\midrule
\multicolumn{2}{l}{\textbf{Architecture(5M)}} \\ \midrule
Encoder layers                & 2 \\
Encoder dimension             & 256 \\
MLP hidden layer dimension    & 512 \\
Latent space dimension        & 512 \\
Task embedding dimension      & 96 \\
Q function drop out rate                      & 0.01 \\
MLP activation                & Mish \\
MLP Normalization             & LayerNorm \\
\bottomrule
\end{tabular}
\end{small}
\end{center}
\vskip -0.1in
\end{table}

\section{Discussion and Additional Results}
\label{sec: appendix C}

\textbf{Value Approximation Error.}
In Section \ref{sec: suboptimal value estimation}, we illustrated value overestimation by comparing the true value estimate with the value function’s estimate. The true value is approximated using Monte Carlo sampling as $\frac{1}{N}\Sigma_{n=1}^N\left[ R(\tau^\pi_n) \right]$, where $\tau^\pi_n$ is trajectory following the nominal policy $\pi$. Unlike the approach to demonstrate overestimation in \cite{TD3} that averages over states drawn i.i.d. from the buffer, we sample all trajectories starting from initial state $s_0 \sim \rho_0$. Accordingly, the function estimation is calculated by averaging the action value following $\pi$ at the initial state as $\mathbb{E}_{s\sim\rho_0, a \sim \pi(\cdot|s)}[\hat{Q}(s, a)]$. We argue that this approach more effectively illustrates the overestimation phenomenon. First, the data distribution in the replay buffer does not directly correspond to the current policy. Second, value approximation errors propagate through TD learning and accumulate at the initial state~\cite{sutton2018reinforcement}, making overestimation more pronounced and easier to observe. 

% horizon ablation
In addition to Figure \ref{fig: approx error}, we compare the approximation error between TD-MPC2 with a planning horizon of 1 and a horizon of 3 using \texttt{h1hand-run-v0} task. As shown in Figure \ref{fig: horizon ablation}, while both versions exhibit significant overestimation bias, the patterns of error growth differ. Over 2 million training steps, the error in the horizon-1 version grows nearly linearly, showing no clear trend of convergence. In contrast, although the horizon-3 version initially accumulates errors more rapidly, its error growth rate gradually decreases over time.

By combining Theorem \ref{lemma: A greedy policy performance bound} and Theorem \ref{lemma: A loop dependency}, we know that $\pi_H$'s dependency on value error is scaled by a factor of $\frac{\gamma^{H-1}(1-\gamma)}{(1-\gamma^H)}$ relative to the greedy policy's dependency on value error. Consequently, given the same $\hat{V}$, we expect the performance gap between the H-step lookahead policy $\pi_{H, k}$ and the greedy policy $\pi_{k+1}$ to be smaller in the early stages of training, which aligns with the lower approximation error initially observed. However, according to Theorem \ref{thm: H-step policy error accumulation}, shorter horizons amplify the error accumulation term, resulting in a faster growth rate. Therefore, this empirical observation further supports our theoretical analysis.

\begin{figure*}[ht]
\vskip 0.2in
\begin{center}
\subfigure[Value Estimation]{
            \includegraphics[width=0.5\linewidth]{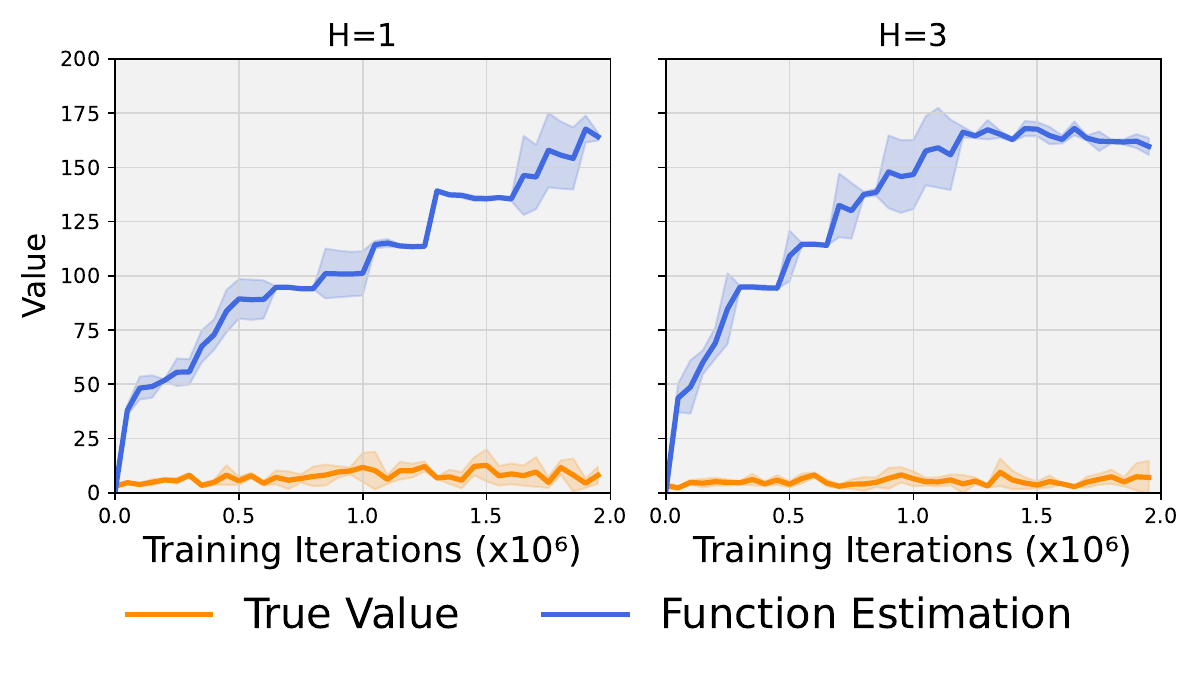}
        } 
\subfigure[Performance]{
            \includegraphics[width=0.281\linewidth]{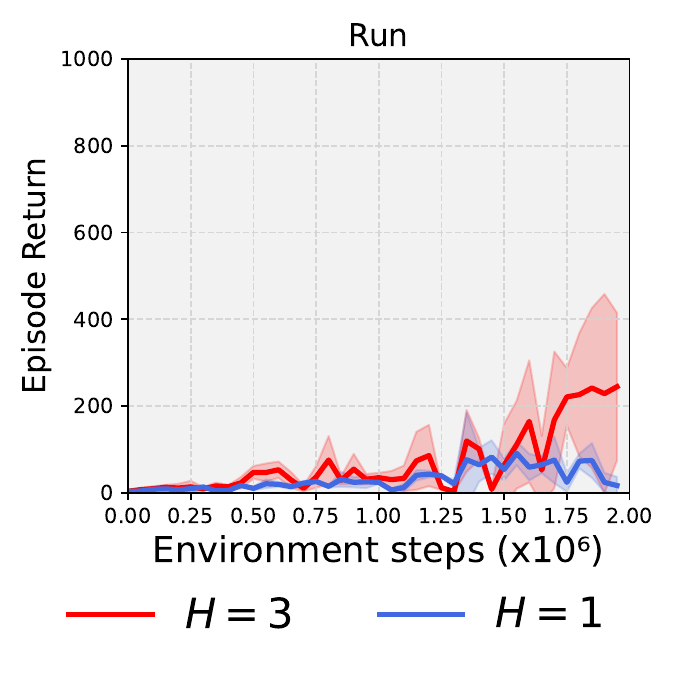}
        }
\caption{TD-MPC2 ablation results of horizon at \texttt{h1hand-run-v0}. (a) Value estimation error with different horizons; (b) Episode return with different horizons. Variant with longer horizon shows convergent error growth pattern and better performance with more training steps.}
\label{fig: horizon ablation}
\end{center}
\vskip -0.2in
\end{figure*}

\textbf{Model Bias.}
In addition to direct OOD query, model bias\cite{BEAR} is also considered an essential source of extrapolation error for offline RL: Due to a limited number of transitions contained in the training dataset, TD-target does not strictly reflect an estimation of real transition. For an online off-policy problem, the problem is not critical since the buffer is, in fact, continuously updated.

\textbf{Pessimistic Policy Training.}
Offline RL algorithms aim to stabilize the learning process and improve policy performance by carefully handling unseen data. However, we argue that not all offline RL methods are well-suited for the TD-MPC setting. Empirically, we found that the FKL algorithm performs suboptimally. In Figure \ref{fig: awac}, \textit{AWAC-MPC} refers to the variant that employs AWAC~\cite{nair2020awac} for constrained policy iteration. Its implementation is based on \href{https://github.com/tinkoff-ai/CORL}{CORL}.

\begin{figure}[ht]
\vskip 0.2in
\begin{center}
\centerline{\includegraphics[width=0.55\columnwidth]{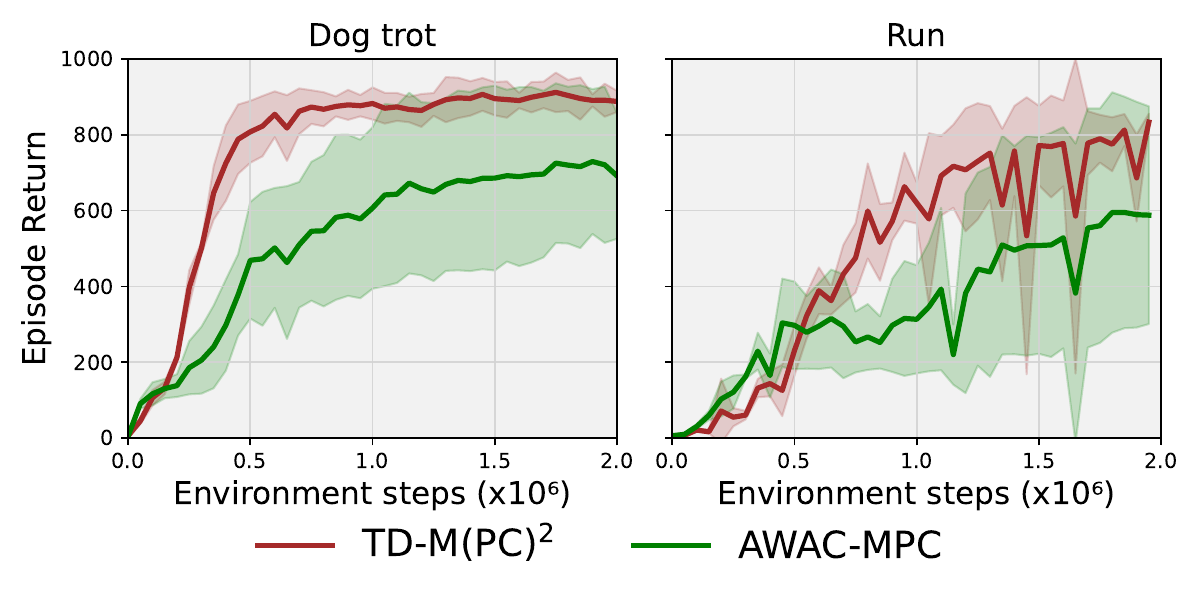}}
\caption{\textbf{Constrained policy update through AWAC} Mean and 95\% CIs across 3 ramdom seeds on high-dimensional tasks. }
\label{fig: awac}
\end{center}
\vskip -0.2in
\end{figure}

Moreover, in scenarios where exploration is critical, intuitively we do not recommend employing conservative Q-learning methods \cite{CQL}. Such methods are designed to penalize the Q-values of out-of-distribution (OOD) actions, ensuring that the agent remains within the boundaries of the training data. While this helps to prevent overestimation, it may introduce a significant drawback: a consistent underestimation of the overall Q-value function. This underestimation not only out-of-distribution data but also reduces the scale of the Q-values overall\cite{cal-QL}. As a result, value-guided planning becomes excessively cautious, disincentivizing the selection of novel actions outside the buffer. This overly conservative behavior severely limits the agent's ability to explore, which, however, is a key aspect of online reinforcement learning. We favor TD3-BC\cite{td3bc} or BC-SAC\cite{bc-sac} style policy learning for their simplicity and effectiveness.

\begin{figure}[ht]
\vskip 0.2in
\begin{center}
\centerline{\includegraphics[width=\columnwidth]{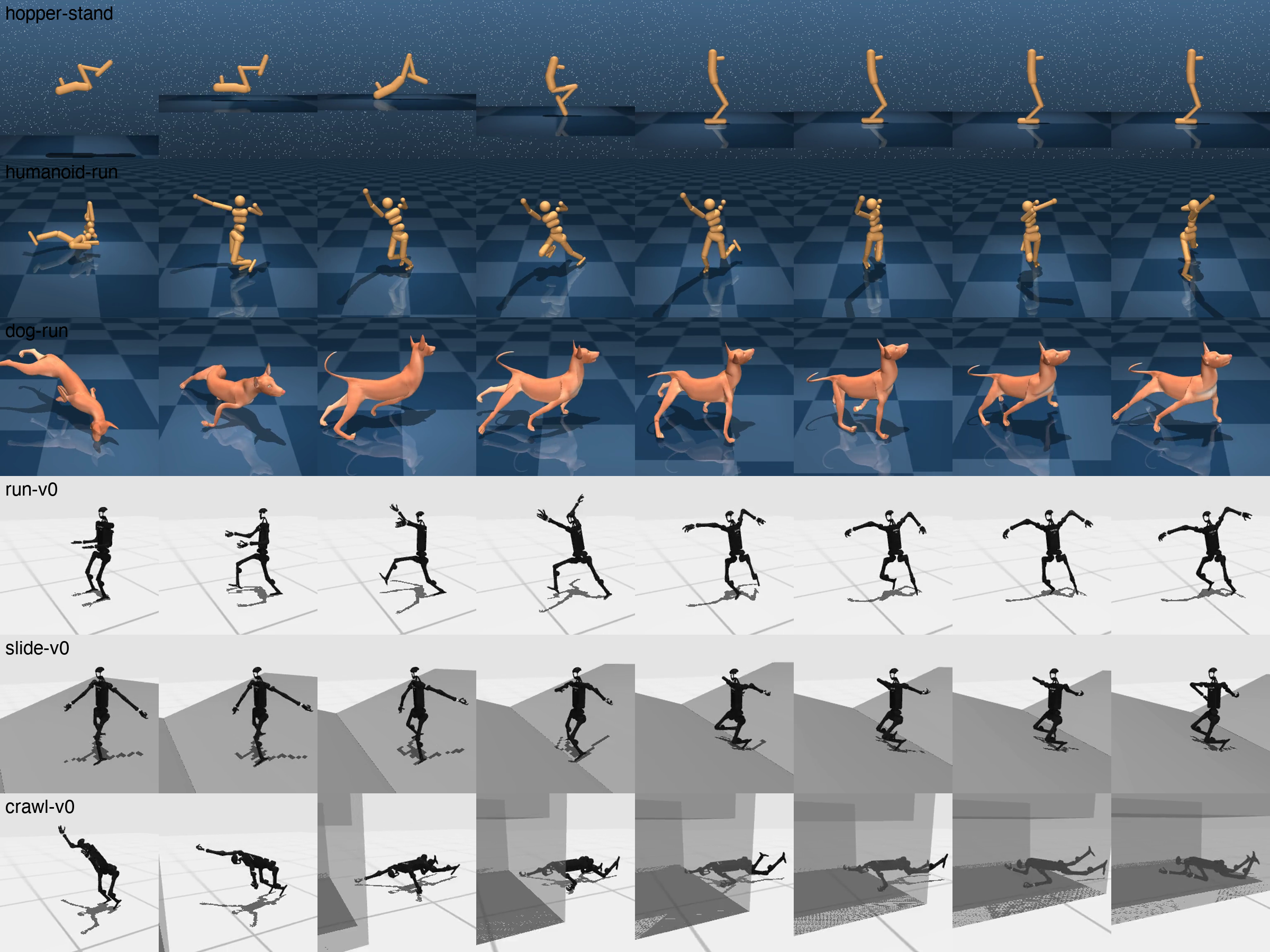}}
\caption{\textbf{Visualizations.} We demonstrate trajectories generated by our method 6 tasks across two benchmarks (DMControl and HumanoidBench) as qualitative results; tasks are listed as follows: \texttt{Hopper-stand} ($\mathcal{A}\in \mathbb{R}^4$), \texttt{Humanoid-run} ($\mathcal{A}\in \mathbb{R}^{21}$), 
\texttt{Dog-run} ($\mathcal{A}\in \mathbb{R}^{36}$), 
\texttt{h1hand-run-v0} ($\mathcal{A}\in \mathbb{R}^{61}$), 
\texttt{h1hand-slide-v0} ($\mathcal{A}\in \mathbb{R}^{61}$),
\texttt{h1hand-crawl-v0} ($\mathcal{A}\in \mathbb{R}^{61}$).}
\label{fig: visualizations}
\end{center}
\vskip -0.2in
\end{figure}

%%%%%%%%%%%%%%%%%%%%%%%%%%%%%%%%%%%%%%%%%%%%%%%%%%%%%%%%%%%%%%%%%%%%%%%%%%%%%%%
%%%%%%%%%%%%%%%%%%%%%%%%%%%%%%%%%%%%%%%%%%%%%%%%%%%%%%%%%%%%%%%%%%%%%%%%%%%%%%%

\end{document}